\newtheorem{proposition}{Proposition}
\def\ie{\emph{i.e.}}
\def\eg{\emph{e.g.}}
\def\etal{\emph{et al.}}
\definecolor{mygray}{gray}{.9}
\def\txtred#1{\textcolor{red}{\textbf{#1}}}
\def\txtblu#1{\textcolor{blue}{\textit{#1}}}
\begin{document}
%
\title{A Hybrid Data Association Framework for Robust Online Multi-Object Tracking}
%
%
%

\author{Min Yang, \IEEEauthorblockN {Yuwei Wu\IEEEauthorrefmark{1}}, and Yunde Jia \IEEEmembership{ Member, IEEE}, 
\thanks{This work was supported in part by the Natural Science Foundation of China (NSFC) under Grants No.61375044 and No. 61472038. (Corresponding author:Yuwei Wu.)}
\thanks{The authors are with Beijing Laboratory of Intelligent Information Technology, School of Computer Science, Beijing Institute of Technology (BIT), Beijing, 100081, P.R. China. Email: \{yangminbit,wuyuwei,jiayunde\}@bit.edu.cn.} }

\maketitle

\begin{abstract}
Global optimization algorithms have shown impressive performance in data-association based multi-object tracking, but handling online data remains a difficult hurdle to overcome. In this paper, we present a hybrid data association framework with a min-cost multi-commodity network flow for robust online multi-object tracking. We build local target-specific models interleaved with global optimization of the optimal data association over multiple video frames. More specifically, in the min-cost multi-commodity network flow, the target-specific similarities are online learned to enforce the local consistency for reducing the complexity of the global data association. Meanwhile, the global data association taking multiple video frames into account alleviates irrecoverable errors caused by the local data association between adjacent frames. To ensure the efficiency of online tracking, we give an efficient near-optimal solution to the proposed min-cost multi-commodity flow problem, and provide the empirical proof of its sub-optimality. The comprehensive experiments on real data demonstrate the superior tracking performance of our approach in various challenging situations.
\end{abstract}



\begin{IEEEkeywords}
Multi-object tracking, data association, optimization, multi-commodity flow.
\end{IEEEkeywords}

%

\section{Introduction}
\label{sec:introduction}
Online multi-object tracking is to estimate the spatio-temporal trajectories of multiple objects in an online video stream (\ie, the video  is provided frame-by-frame), which is a fundamental problem
for numerous real-time applications, such as video surveillance, autonomous driving, and robot navigation. Assume that an object detector is available to detect potential locations of multiple objects in each frame, the tracking problem is consequently reduced to a data association procedure which links these individual detections to form consistent trajectories.

Data association is a challenging problem in many situations, especially in complex scenes, due to the presence of occlusions, inaccurate detections, and interactions among similar-looking objects.
Standard approaches for data association are to recursively link detections frame by frame \cite{breitenstein2011online,shu2012part,possegger2014Occlusion,xiang2015learning,solera2015learning,yang2016temporal,hong2016online,Milan2016arxiv}, resulting in a bi-partite matching between the existing trajectories and the newly obtained detections, as shown in Fig.~\ref{motivation}(a). These approaches are temporally local and computationally efficient, making them suitable for the online setting. However, using only the local information for data association might lead to irrecoverable errors when an object is undetected or is confused with clutters.
To overcome this shortcoming, the global data association over entire video frames (or a batch of frames) has been devoted to inferring optimal trajectories \cite{zhang2008global,berclaz2011multiple,luo2015automatic,chari2015pairwise,dehghan2015gmmcp,tang2015subgraph,wang2016tracklet,tang2016multi,maksai2016globally}, as shown in Fig.~\ref{motivation}(b). Such a data association problem can be solved in an optimization framework with carefully designed cost functions. Unfortunately, global association methods can not be directly applied to online video streams. Overlapping temporal window is a common choice to handle online data \cite{berclaz2011multiple,chari2015pairwise,dehghan2015gmmcp}, but the connection between consecutive batches remains an open problem.

\begin{figure*}[t]
    \centering
    \includegraphics[width=0.98\textwidth]{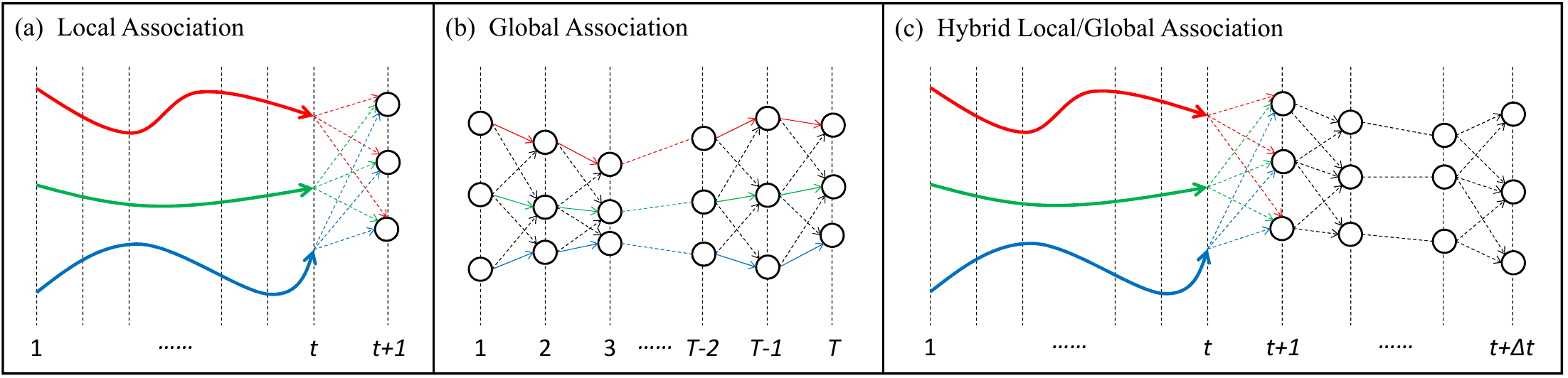}
    \vspace{-0pt}
    \caption{Illustration of our hybrid data association approach. (a) Local association is performed between two consecutive frames $t$ and $(t+1)$, and a bi-partite matching between the existing trajectories (marked as color arrows) at the current frame $t$ and the detections (marked as circles) from the next frame $(t+1)$ is usually solved. (b) Global association is performed over a batch of frames (length $T$ in this example), and a optimization problem is usually solved to infer optimal trajectories based on pairwise affinities between detections. (c) The hybrid association finds globally optimal associations for the existing trajectories within a temporally local window (length $\Delta t$ in this example), and the target-specific information from the existing trajectories provides helpful local constraints to guide the global optimization. }
    \vspace{-0pt}
    \label{motivation}
\end{figure*}

In this paper, we propose a hybrid data association framework for online multi-object tracking, which characterizes the superiorities of both local and global data association methods. The core of our approach lies on the association between the existing trajectories and the detections from multiple video frames within a temporal window, as shown in Fig.~\ref{motivation}(c). We exploit a mini-cost multi-commodity flow which is with respect to a cost-flow network constructed by the detections from multiple frames.
 The proposed mini-cost multi-commodity network is able to formulate a hybrid data association strategy to handle online data with an efficient near-optimal solution.

In our framework, concretely, all possible associations among the detections are represented by edges in the network, where the corresponding edge costs account for the association likelihoods.
Each existing trajectory is then supposed to be a specific commodity, and its optimal associations can be found by sending specific commodity flows through the network with a minimum cost.
To this end, the following three challenges need to be studied: (i) identifying newly appeared objects automatically; (ii) computing edge cost for different commodities; (iii) solving the min-cost multi-commodity flow problem efficiently.
By addressing these challenges, we bring the following three contributions:
\begin{itemize}\setlength{\itemsep}{-0pt}

\item We introduce a dummy commodity into our network to automatically identify a new object. The dummy commodity corresponds to a target-independent model, and its commodity flows indicate the permissible tracks of objects newly appeared in a temporal window.

\item We present an online discriminative appearance modeling approach to build target-specific models for different existing trajectories. The edge costs of multiple commodities in the network are estimated by exploiting the target-specific information to discriminate a specific target from both other targets and the background.

\item We propose a near-optimal solution algorithm to the min-cost multi-commodity flow problem, and provide the empirical proof of its sub-optimality. By using the reformulation and column generation strategy, our solution is extremely efficient and performs superiorly in multi-object tracking.
\end{itemize}

The proposed hybrid strategy offers several advantages over existing methods. First, it makes the global optimization of trajectories applicable to online data. The local association between consecutive frames is extended to account for more hypotheses from multiple frames. Irrecoverable errors caused by noisy detections or frequent occlusions can be alleviated to improve online tracking performance. Second, the target-specific information from the existing trajectories is explicitly modeled to guide the global optimization over the current batch of frames. In practice, it enforces local constraints to reduce the complexity of the optimization problem as the associated detections are restricted to be consistent with the target-specific models.
We believe that the techniques described in this paper are of wide interests due to their efficiency and performance. Both qualitative explanation and experimental confirmation are provided to support this claim.

The rest of the paper is organized as follows. Section \ref{relatdwork} reviews the related work. In Section \ref{MCMCNF}, we describe the details of our online multi-object tracking method using the hybrid data association including min-cost multi-commodity flow formulation and its edge costs. Section \ref{Optimization} presents the globally-optimal solution of our model. We report and discuss the experimental results in Section \ref{experiment}, and conclude the paper in Section \ref{conclusion}.

\section{Related Work}
\label{relatdwork}

In multi-object tracking, data association based methods fall into a sub-domain known as the \emph{tracking-by-detection} technique, which has shown impressive tracking performance in unconstrained environments. A thorough review can be found in~\cite{luo2014multiple}.
As evidenced in Section \ref {sec:introduction}, the local association method has aroused considerable research interests. Especially with the success of recurrent neural networks (RNNs) in computer vision community \cite{karpathy2015visualizing}, RNNs-based methods have witnessed significant advances on MOT problems. Based on the pioneer work introduced by Ondruska and Posner \cite{ondruska2016deep}, RNNs-based method quickly sparked significant interest to model the local association, and inspired a number of extensions including \cite{MilanAAAIRNNTracking,alahi2016social,sadeghian2017tracking}. Nevertheless, the RNNs usually comes with high computational and memory demands both during the model training and inference.
We here introduce to explicitly enforce locality into the global data association formulation, and introduce a hybrid   data association framework that is able to integrate the advantages of both local and global association methods.

Maintaining locality for global data association is critical for multi-object tracking performance, since global optimization might scale poorly for the complex scenario and long batches without local constraints. Many global association methods enforce locality by iteratively optimizing trajectories  \cite{huang2008robust,yang2012multi,Milan:2014:CEM,Milan2016PAMI}, or using tracklets (\ie, short-term trajectory fragments) instead of individual detections \cite{zhang2008global,yang2014multi,wang2016tracklet}. However, these strategies are hardly applied to online video streams.
Alternatively, one can divide an online video stream into consecutive batches with temporal sliding windows, and apply global data association to each video batch \cite{berclaz2011multiple,chari2015pairwise,dehghan2015gmmcp}. In order to produce consistent trajectories, the connection between optimized trajectories from adjacent batches need to be considered. However, most existing methods adopt heuristic strategies to connect adjacent batches and can not ensure the optimality of the trajectories.

To retain the ability of handling online data, we turn to explicitly model the target-specific information from previous observations, similar to local data association methods, to cooperate with the global data association over multiple frames. Integrating local and global data association is rarely mentioned in the literature. Lenz \etal~\cite{lenz2015followme} proposed an approximate online solution to the min-cost network flow problem with bounded memory and computation. The local consistency, however, is ignored in the optimization of trajectories. Choi~\cite{choi2015NOMT} proposed a near-online multi-object tracking method to formulate the data association between previously tracked objects and detections in a temporal window. The method has a similar problem setting with ours, while the difference is that a highly non-convex formulation is adopted to select appropriate hypotheses for the objects. The solution heavily relies on both the affinity measures and the generated trajectory hypotheses. In contrast, we use a more compact formulation, \ie, the min-cost multi-commodity flow, to address the hybrid data association. The target-specific information contained in the existing trajectories is incorporated into the flow costs in a natural way, ensuring that the objective is still convex. We also propose an optimization algorithm to the network flow problem, and show its effectiveness in multi-object tracking.


Recently, multi-commodity flow has been introduced into multi-object tracking in \cite{ben2014multi,dehghan2015target}. Ben Shitrit \etal~\cite{ben2014multi}  employed the multi-commodity network to account for different appearance groups which are fixed beforehand. Each appearance group (\eg, a basketball team) is supposed to be a specific commodity in the network, and solving multi-commodity flow problems is able to distinguish different appearance groups during the optimization process. Dehghan \etal~\cite{dehghan2015target} have focused on integrating object detector learning and multi-object tracking, where the multi-commodity network is used to track a fixed number of objects in a short video batch. Our approach is different from these methods in that we use a multi-commodity network to formulate a hybrid data association strategy to handle online data. Furthermore, a high-quality near-optimal solution to the min-cost multi-commodity flow problem can be achieved by an efficient algorithm, especially when the number of objects (commodities) is relatively large. Thus we do not need to heuristically prune the graph \cite{ben2014multi} or iteratively relax the hard constraints \cite{dehghan2015target}.

\begin{figure*}
    \centering
    \includegraphics[width=0.98\textwidth]{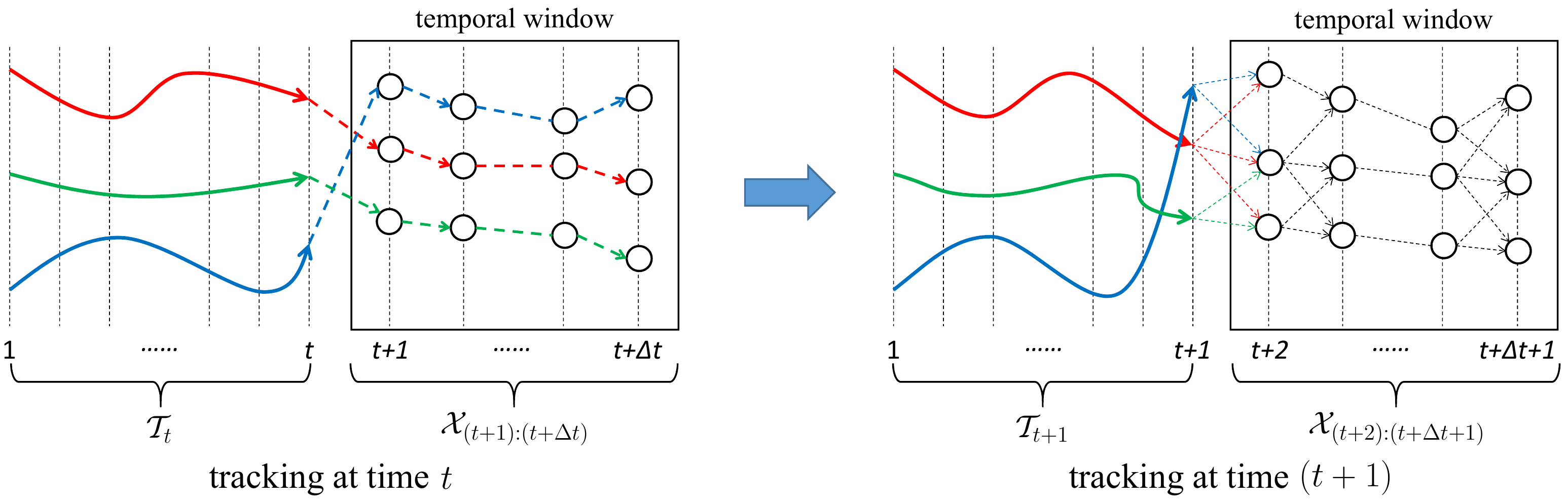}
    \vspace{-0pt}
    \caption{Illustration of the online multi-object tracking process with our hybrid   data association. At each time step $t$, we solve a data association problem between the set of existing trajectories $\mathcal{T}_t$ and the set of detection responses $\mathcal{X}_{(t+1):(t+\Delta t)}$ in a temporal window $[t+1,t+\Delta t]$. After that, the trajectory set $\mathcal{T}_t$ is updated to $\mathcal{T}_{t+1}$ by incorporating the associated detections at frame $(t+1)$, and the temporal window moves \emph{one} time step forward.}
    \vspace{-0pt}
    \label{TrackingProcess}
\end{figure*}


\section{Hybrid Data Association}
\label{MCMCNF}
Let $\mathcal{X} = \{\mathbf{x}_i\}_{i=1}^{N}$ denote the set of detections from the video with $\mathbf{x}_i$ the $i$-th detection and $N$ the number of detections.
Assume that, at each time step $t$, we have a set of existing trajectories $\mathcal{T}_t$ and observe multiple video frames in a temporal window $[t+1,t+\Delta t]$. A set of detection responses $\mathcal{X}_{(t+1):(t+\Delta t)}$ is obtained by applying an object detector to each video frame within the temporal window. The task of \emph{hybrid   data association} is to find globally optimal associations of $\mathcal{T}_t$ over the detections $\mathcal{X}_{(t+1):(t+\Delta t)}$, and simultaneously identify newly appeared objects. Then the trajectory set $\mathcal{T}_t$ is updated to $\mathcal{T}_{t+1}$ by incorporating the associated detections at the frame $(t+1)$, and the temporal window moves \emph{one} time step forward, as shown in Fig.~\ref{TrackingProcess}. In practice, it causes a latency of $(\Delta t - 1)$ to output tracking results, as the trajectories at frame $(t+1)$ is not updated until the frame $(t+\Delta t)$ is observed. Nevertheless, our approach operates in a fully online manner and thus is capable of handling online data. Note that the traditional local or global data association methods can be regarded as special cases of the proposed hybrid framework by adjusting the length of the temporal window as $\Delta t = 1$ or $\Delta t = T$ (total length of the video), respectively.

In this section,  the data association between $\mathcal{T}_t$ and $\mathcal{X}_{(t+1):(t+\Delta t)}$ is formulated as a \emph{min-cost multi-commodity flow} problem, as in Fig.~\ref{MCF-example}. For the convenience of discussion, we drop the time index in the following description, and denote the current set of existing trajectories as $\mathcal{T} = \{T_k\}_{k=1}^{K}$, where $T_k$ is the $k$-th existing trajectory and $K$ is the number of existing trajectories.

\subsection{Our min-cost multi-commodity flow}

Given the set of existing trajectories $\mathcal{T}$ and the set of detections $\mathcal{X} = \{\mathbf{x}_i\}_{i=1}^{N}$, we introduce a directed network $G(\mathcal{X})$ with multiple sources $s_k$ and sinks $n_k$, $k \in \{0,1,\ldots,K\}$.
The directed network $G(\mathcal{X})$ is constructed by the set of detections $\mathcal{X}$ .
Each detection $\mathbf{x}_i \in \mathcal{X}$ corresponds to a pair of nodes $(u_i, v_i)$ in $G$ connected by an \emph{observation edge} with cost $c_i$ and flow $f_i$. The cost $c_i$ indicates the confidence of observing the detection $\mathbf{x}_i$, and the flow $f_i$ encodes the selection of the detection $\mathbf{x}_i$ in some tracks.
Each transition between a pair of detections $(\mathbf{x}_i, \mathbf{x}_j)$ is represented by a \emph{transition edge} $(v_i, u_j)$ with cost $c_{ij}$ and flow $f_{ij}$. The cost $c_{ij}$ represents the coherence between detections $\mathbf{x}_i$ and $\mathbf{x}_j$, and the flow $f_{ij}$ indicates that the two detections are connected through the same track. The set of permissible transitions between detections is denoted as $E$. It could be a subset of all pairs of detections in successive frames by using choice heuristics (\eg, spatial proximity).
Finally, the source $s$ and sink $n$ are introduced with \emph{track start edges} $(s,u_i)$ (with cost $c_{si}$ and flow $f_{si}$) and \emph{track termination edges} $(v_i,n)$ (with cost $c_{it}$ and flow $f_{in}$).
Then the multi-object tracking problem is formulated as sending a set of flows from the source $s$ to sink $n$, which minimizes the total cost
\begin{equation} \label{eq:0-1}
C(f) = \sum_i c_{i}f_{i} + \sum_i c_{si}f_{si} + \sum_{ij \in E} c_{ij}f_{ij} + \sum_i c_{in}f_{in}.
\end{equation}

In this work, each existing trajectory $T_k$ is supposed to be a target-specific commodity $k$ which corresponds to a source-sink pair $(s_k,n_k)$. Specifically, sources $s_k$ and sinks $n_k$ are introduced with track start edges $(s_k,u_i)$ and track termination edges $(v_i,n_k)$ connected to all detections, indicating that the existing trajectories or newly appeared trajectories are allowed to start and terminate at any detection from the temporal window. For each commodity $k$, sending flows from $s_k$ to $n_k$ through the network incurs a specific set of edge costs. Formally, we use $f_{i}^k$, $f_{ij}^k$, $f_{si}^k$, and $f_{in}^k$ to represent the amount of the $k$-th commodity flows on the observation edge $(u_i, v_i)$, the transition edge $(v_i, u_j)$, the track start edges $(s_k,u_i)$, and the track terminate edge $(v_i,n_k)$, respectively. The corresponding edge costs, in a similar way, are denoted as $c_{i}^k$, $c_{ij}^k$, $c_{si}^k$, and $c_{in}^k$.

To identify newly appeared objects, we add a dummy commodity $0$ with the source $s_0$ and sink $n_0$ to represent a target-independent model. We call a flow sent from $s_k$ to $n_k$ the $k$-th commodity flow. That is, the source and sink are extended to account for multiple commodities (see an example in Fig.~\ref{MCF-example}). Then the optimal associations of $T_k$ over $\mathcal{X}$ can be found by sending the $k$-th commodity flow through the network. It leads to a multi-commodity flow problem in the community of network flow \cite{ahuja1993network}.

With the network $G(\mathcal{X})$, the hybrid   data association problem is formulated as finding an optimal set of flows between multiple source and sink pairs $\{(s_k,n_k)\}_{k=0}^{K}$, which \textbf{minimizes} the total cost
\begin{eqnarray} \label{eq:1}
\begin{split}
\sum_{k=0}^K \Big( \sum_i c_{si}^k f_{si}^k + \sum_i c_{i}^k f_{i}^k + \sum_{ij \in E} c_{ij}^k f_{ij}^k + \sum_i c_{in}^k f_{in}^k \Big).
\end{split}
\end{eqnarray}
Intuitively, each flow path connects a set of coherent detections over time and thus can be interpreted as an object track. In practice, the flow should subject to the following constraints to satisfy the physical conditions in a real world:
\begin{align}
&\forall k, \quad f_{i}^k,f_{ij}^k,f_{si}^k,f_{in}^k \in \{0,1\}, \label{eq:2} \\
&\forall k, \quad f_{si}^k + \sum_{j:ji \in E} f_{ji}^k = f_i^k = \sum_{j:ij \in E} f_{ij}^k + f_{in}^k, \label{eq:3} \\
&\forall e\in\{i,ij,si,in\}, \quad \sum_{k=0}^K f_{e}^k \leq 1, \label{eq:4} \\
&\forall k, \quad \sum_{i} f_{si}^k = d_k = \sum_{i} f_{in}^k. \label{eq:5}
\end{align}
The constraint (\ref{eq:2}) is a \emph{edge capacity constraint} which means that each detection belongs to at most one track.
The \emph{flow conservation constraint} (\ref{eq:3}) encodes that the sum of flows arriving at any detection $\mathbf{x}_{i}^{k}$ is equal to the flow of its observation edge $f_{i}^{k}$, which also is the sum of outgoing flows from the detection $\mathbf{x}_{i}^{k}$.
The constraints (\ref{eq:2}), (\ref{eq:3}), and (\ref{eq:4}) ensure that all permissible flows in the network come in the form of flow paths from sources to sinks, and also ensure that there is no overlap between multiple paths. The flow variables $f_{i}^k$, $f_{ij}^k$, $f_{si}^k$, $f_{in}^k$ act as binary indicators taking the value $1$ when the corresponding edge is selected in a flow path of the commodity $k$. The constraint (\ref{eq:5}) restricts the total amount of flows sent from $s_k$ to $n_k$ to be a certain value $d_k$.
Consequently, each flow path in the network can be interpreted as an object track which connects a set of coherent detections over time. A flow path of commodity $k$ with $k \neq 0$ is the success track of the existing trajectory $T_k$ within the temporal window. We thus set $d_k = 1$ for $k \neq 0$ to ensure that each existing trajectory has only one success track. For the dummy commodity, we set $d_0 = 20$ to capture a sufficient number of new objects.

To simplify the notation, we collect the flow variables $f_{i}^k$, $f_{ij}^k$, $f_{si}^k$, $f_{in}^k$ in a long vector $\mathbf{f}^k$ and the edge cost variables $c_{i}^k$, $c_{ij}^k$, $c_{si}^k$, and $c_{in}^k$ in a long vector $\mathbf{c}^k$, respectively. Then the optimization problem that minimizes the cost (\ref{eq:1}) with constraints (\ref{eq:2}), (\ref{eq:3}), (\ref{eq:4}), and (\ref{eq:5}) can be rewritten as
\begin{eqnarray} \label{eq:6}
\begin{split}
&\min_{\mathbf{f}} \  \sum_{k=0}^{K} \left(\mathbf{c}^{k}\right)^{\top} \mathbf{f}^{k} \\
        &s.t. \quad      \forall k, \ \ \mathbf{f}^{k} \geq \mathbf{0}, \\
        &\quad \quad \    \forall k, \ \ U\mathbf{f}^{k} = \mathbf{0}, \\
        &\quad \quad \    \forall k, \ \ V\mathbf{f}^{k} = d_k \mathbf{1}, \\
        &\quad \quad \    \forall k, \ \ \mathbf{f}^{k} \ \textrm{integer}, \\
        &\quad \quad \  \sum_{k=0}^{K} \mathbf{f}^{k} \leq \mathbf{1},
\end{split}
\end{eqnarray}
where the constraints are rearranged into the matrix form. The vectors with all zero and one entries are denoted as $\mathbf{0}$ and $\mathbf{1}$, respectively.


\begin{figure}
    \centering
    \includegraphics[width=0.48\textwidth]{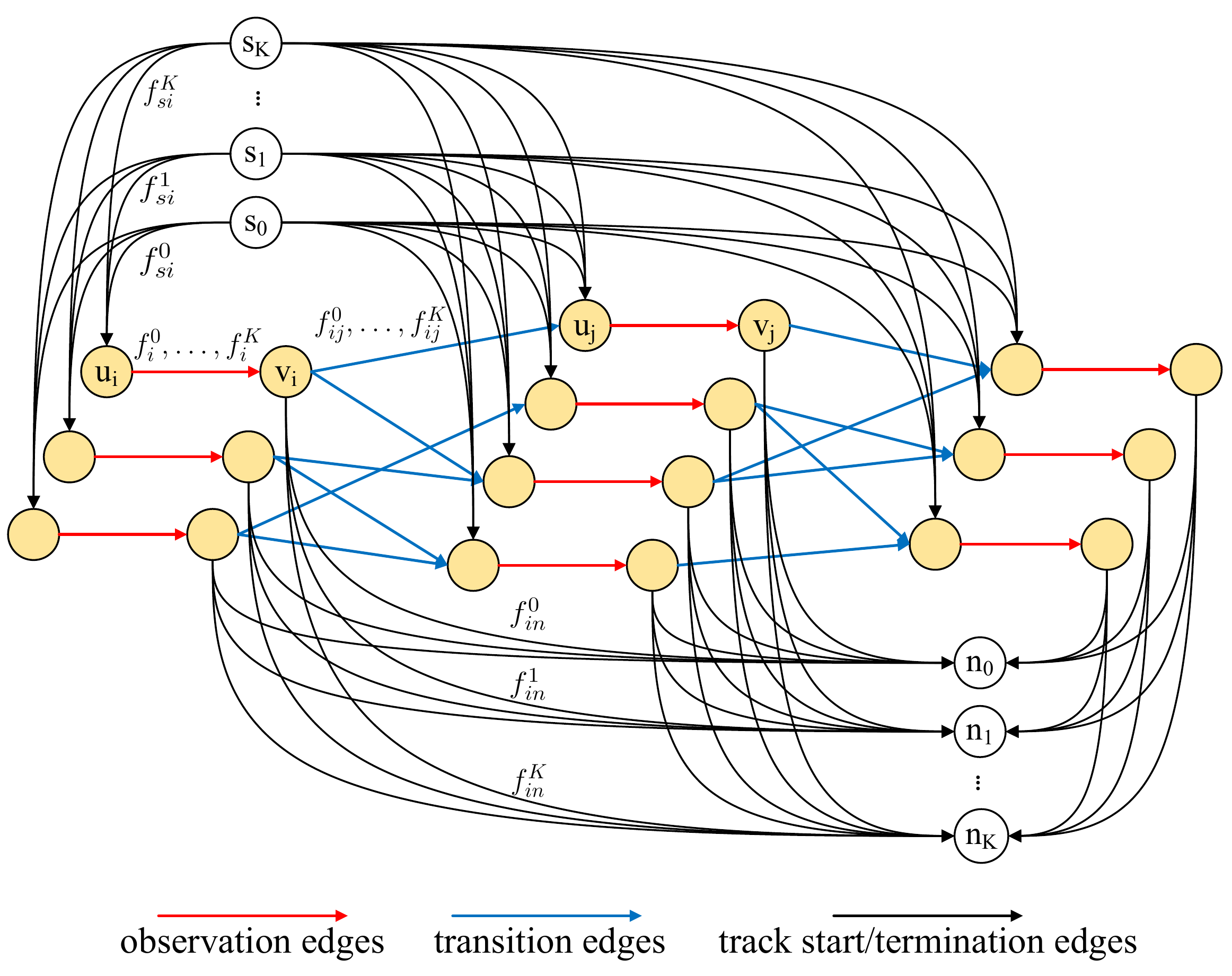}
    \vspace{-0pt}
    \caption{An example of the directed network with multiple sources and sinks. Each detection $\mathbf{x}_i \in \mathcal{X}$ is represented by a pair of nodes connected by an {observation edge}. Possible transitions between detections are modeled by {transition edges}. To allow tracks to start and terminate at any detections from the video, each detection is connected to both a source $s$ and a sink $n$. We use $f_{i}^k$, $f_{ij}^k$, $f_{si}^k$, and $f_{in}^k$ to represent the amount of the $k$-th commodity flows on the observation edge $(u_i, v_i)$, the transition edge $(v_i, u_j)$, the track start edges $(s_k,u_i)$, and the track terminate edge $(v_i,n_k)$, respectively. We add a dummy commodity $0$ with the source $s_0$ and sink $n_0$ to represent a target-independent model}
    \vspace{-0pt}
    \label{MCF-example}
\end{figure}

\subsection{Computing edge costs}

In our min-cost multi-commodity flow formulation, sending flows of a commodity $k$ through the network incurs a specific set of edge costs $\mathbf{c}^k$. Therefore, local information contained in the existing trajectories can be incorporated into the edge costs in a natural way, and thus guides the global data association over multiple video frames. In this subsection, we show that the edge costs can be computed by exploiting the target-specific information from the existing trajectories.



\subsubsection{Observation cost}

Given an existing trajectory $T_k$ and a detection $\mathbf{x}_i$, the observation cost $c_{i}^k$ encodes the possibility of $\mathbf{x}_i$ belonging to $T_k$. $c_{i}^k$ is computed by
\begin{equation} \label{eq:6-1}
c_{i}^k = - \phi_k(\widetilde{\mathbf{a}}_k,\mathbf{a}_i),
\end{equation}
where $\phi_k(\cdot,\cdot)$ is the similarity function used to recognize the specific object corresponding to $T_k$, and $\widetilde{\mathbf{a}}_k$ and $\mathbf{a}_i$ are the appearance feature of the existing trajectory $T_k$ and the detection $\mathbf{x}_i$, respectively. We use Convolutional Neural Network (CNN) features to capture the appearance information of an object, as described in Section~\ref{cnn}. The appearance feature of $T_k$ is represented by the average feature vector over the last $10$ frames, and the appearance feature of $\mathbf{x}_i$ is extracted from the image region corresponding to its location. The similarity function $\phi_k(\cdot,\cdot)$ is involved to assign high similarity scores to pairs of appearance features when both of them originate from the same object corresponding to $T_k$, while producing low similarity scores when more than one of them originate from the other object. We utilize an online similarity learning approach to learn the target-specific similarity function $\phi_k(\cdot,\cdot)$, as described in Section~\ref{OML}. For the dummy commodity, we set $c_{i}^0$ to the negative detector score of the detection $\mathbf{x}_i$.

Note that the observation costs take negative values when the appearance similarity scores or the detector scores are larger than zero, which facilitates the generation of long trajectories. Furthermore, the observation costs taking negative values ensure the appearance consistency for each trajectory since the total cost of the network flows is minimized in our model.

\subsubsection{Transition cost}

The transition cost $c_{ij}^k$ indicates the confidence of connecting the detections $\mathbf{x}_i$ and $\mathbf{x}_j$ in the same success track of $T_k$, which can be computed by
\begin{equation} \label{eq:6-3}
c_{ij}^k = - \phi_k(\mathbf{a}_i,\mathbf{a}_j),
\end{equation}
where $\mathbf{a}_i$ and $\mathbf{a}_j$ are the appearance feature of the detection $\mathbf{x}_i$ and the detection $\mathbf{x}_j$, respectively. For the dummy commodity, the transition cost $c_{ij}^0$ is computed by using the cosine of the angel between two appearance feature vectors as a target-independent similarity function.

\subsubsection{Track start/termination cost}

The track start cost $c_{si}^k$ encodes the possibility that a success track of the $T_k$ starts at the detection $\mathbf{x}_i$. Given the frame index $t_i$ of the detection $\mathbf{x}_i$, we use a constant velocity model to obtain a prediction of $T_k$ at frame $t_i$, denoted as $p(T_k,t_i)$. Then the track start cost $c_{si}^k$ is given by
\begin{equation} \label{eq:66-3}
c_{si}^k = - \eta^{t_i - \psi(T_k)} \cdot o\left(p(T_k,t_i), \mathbf{x}_i\right),
\end{equation}
where $\eta$ is a decay factor (set to 0.95) which discounts long term prediction, $\psi(T_k)$ is the last associated frame of $T_k$, and the function $o$ denotes the overlap rate between two bounding boxes. For the dummy commodity, we set the track start cost $c_{si}^0$ to be a large positive value (10 in our implementation) to reduce the priority of identifying new objects while facilitating the association of the existing trajectories.

Similarly, the track termination cost $c_{in}^k$ encodes the possibility that a success track of the $T_k$ ends at the detection $\mathbf{x}_i$. Assume that an object trajectory ends at all detections with the same probability, we simply set $c_{in}^k = 10$ for all $k$.

\subsection{Online similarity learning}
\label{OML}

Given an existing trajectory $T_k$, we learn a target-specific similarity function $\phi_k(\cdot,\cdot)$ to distinguish the corresponding object from the others. Formally, we use a parametric similarity function that has a bi-linear form to estimate the appearance similarity between two appearance features $\mathbf{x}_i$ and $\mathbf{x}_j$,
\begin{equation} \label{eq:6-4}
\phi_k(\mathbf{a}_i,\mathbf{a}_j) = \mathbf{a}_i^\top \mathbf{W}_k \mathbf{a}_j,
\end{equation}
where $\mathbf{W}_k \in \mathbb{R}^{m \times m}$ with $m$ the dimensionality of appearance features. The task of online similarity learning is to estimate an appropriate parameter matrix $\mathbf{W}_k$ for the existing trajectory $T_k$ in the process of the online tracking.

At each time $t$, we assume that a detection from time $(t+1)$, whose appearance feature is denoted as $\mathbf{a}_k^{(t+1)}$, is associated with the existing trajectory $T_k^{(t)}$. The parameter matrix $\mathbf{W}_k^{(t)}$ of $T_k^{(t)}$ at the current time $t$ is needed to be updated to account for the newly observed appearance feature $\mathbf{a}_k^{(t+1)}$. The principle of updating $\mathbf{W}_k^{(t)}$ is to recognize $\mathbf{a}_k^{(t+1)}$ as a relevant appearance and $\{ \mathbf{a}_l^{(t+1)}|l \neq k \}$ as irrelevant appearances. We therefore construct a set of triplets $\mathcal{S}_k^{(t+1)} = \{(\widetilde{\mathbf{a}}^{(t)}_k,\mathbf{a}^{(t+1)}_k,\mathbf{a}^{(t+1)}_l)| l \neq k \} $, where $\widetilde{\mathbf{a}}_k^{(t)}$ is the appearance feature of $T_k^{(t)}$ at the current time $t$. Each triplet $(a,b,c)$ indicate that the similarity between $a$ and $b$ is apparently larger than the similarity between $a$ and $c$. Forcing the current matrix $\mathbf{W}_k^{(t)}$ to satisfy the triplet set $\mathcal{S}_k^{(t+1)}$ leads to the updated matrix $\mathbf{W}_k^{(t+1)}$ at time $(t+1)$.

We here present an incremental update algorithm to satisfy the triplets sequentially \cite{chechik2010large}. Without loss of generality, assume that we have a parameter matrix $\mathbf{W}^{\tau}$ at the $\tau$-th iteration and observe a triplet $(\mathbf{a}_\tau,\mathbf{a}_\tau^{+},\mathbf{a}_\tau^{-}) $. The goal of incremental updating is to obtain a new matrix $\mathbf{W}$ satisfying
\begin{equation} \label{eq:6-5}
(\mathbf{a}_\tau^{+})^\top \mathbf{W} (\mathbf{a}_\tau^{+}) > (\mathbf{a}_\tau^{-})^\top \mathbf{W} (\mathbf{a}_\tau^{-}) + 1,
\end{equation}
which means that it fulfills the definition of a triplet with a safety margin of $1$. Meanwhile, applying the Passive-Aggressive algorithm \cite{crammer2006online} to maintain smoothness, the new matrix is selected to remain close to the previous matrix $\mathbf{W}^{\tau}$.

We define a hinge loss function to measure the confidence that a matrix $\mathbf{W}$ satisfies the triplet $(\mathbf{a}_\tau,\mathbf{a}_\tau^{+},\mathbf{a}_\tau^{-})$,
\begin{eqnarray} \label{eq:6-6}
\begin{split}
L_{\mathbf{W}}&(\mathbf{a}_\tau,\mathbf{a}_\tau^{+},\mathbf{a}_\tau^{-}) \\
 & = \max\left\{ 0, 1-(\mathbf{a}_\tau^{+})^\top \mathbf{W} (\mathbf{a}_\tau^{+})+(\mathbf{a}_\tau^{-})^\top \mathbf{W} (\mathbf{a}_\tau^{-})  \right\}.
\end{split}
\end{eqnarray}
Then the problem of incremental updating can be expressed as
\begin{eqnarray} \label{eq:6-7}
\begin{split}
&\mathbf{W}^{\tau+1} = \mathop{\arg\min}_{\mathbf{W}} \frac{1}{2} \|\mathbf{W} - \mathbf{W}^{\tau} \|_{F}^2 + C\xi \\
        &s.t. \quad \quad L_{\mathbf{W}}(\mathbf{a}_\tau,\mathbf{a}_\tau^{+},\mathbf{a}_\tau^{-}) \leq \xi, \quad \xi \ge 0,
\end{split}
\end{eqnarray}
where $\|\cdot\|_F$ is the Frobenius norm, $\xi$ is a slack variable, and $C$ is a parameter that controls the trade-off between preserving smoothness and minimizing the loss on the current triplet.

Since Eq.~(\ref{eq:6-7}) is a constrained convex optimization problem, we can directly derive its optimal solution by using the Karush-Kuhn-Tucker (KKT) conditions,
\begin{eqnarray} \label{eq:6-8}
\begin{split}
\left\{
    \begin{aligned}
     &\mathbf{W}^{\tau+1} = \mathbf{W}^{\tau} + \alpha_\tau \mathbf{V}^\tau, \\
     &\mathbf{V}^\tau = \mathbf{a}_\tau (\mathbf{a}_\tau^{+} - \mathbf{a}_\tau^{-})^{\top}, \\
     &\alpha_\tau = \min \left\{C, \frac{L_{\mathbf{W}^{\tau}}(\mathbf{a}_\tau,\mathbf{a}_\tau^{+},\mathbf{a}_\tau^{-}) }{\|\mathbf{V}^\tau\|^2} \right\}.
    \end{aligned}
\right.
\end{split}
\end{eqnarray}
According to Eq.~(\ref{eq:6-8}), the update only happens when the hinge loss $L_{\mathbf{W}^{\tau}}(\mathbf{a}_\tau,\mathbf{a}_\tau^{+},\mathbf{a}_\tau^{-})$ on the triplet is larger than zero.

To summarize, for each existing trajectory $T_k^{(t)}$ at time $t$, we incrementally update the similarity function parameterized by the matrix $\mathbf{W}_k^{(t)}$ through the following steps:
\begin{itemize}\setlength{\itemsep}{-0pt}

\item construct the triplet set $\mathcal{S}_k^{(t+1)}$;

\item sequentially update the matrix  by using the triplet in $\mathcal{S}_k^{(t+1)}$ one-by-one with Eq.~(\ref{eq:6-8});

\item obtain the updated matrix $\mathbf{W}_k^{(t+1)}$ at the time $(t+1)$.
\end{itemize}
Note that the parameter matrix $\mathbf{W}_k$ of the existing trajectory $T_k$ is initialized to an identity matrix when the trajectory is initialization. The incremental update on each iteration, as defined by Eq.~(\ref{eq:6-8}), only involves few matrix operations and thus is extremely efficient. Moreover, the entire online similarity learning process for each trajectory is independent and can be performed parallelly to further improve the computational efficiency.

\section{Optimization}
\label{Optimization}

Finding a global minimum to the hybrid   data association problem (\ref{eq:6}) is exactly an Integer Linear Program (ILP) which is NP-hard. In addition, the optimal solution to its Linear Program (LP) relaxation is not guaranteed to be integral, which serves as an important requirement for the generation of reasonable object trajectories. In this section, by exploring the special structure of the constraints, we propose an efficient optimization algorithm that is able to provide near-optimal integer solutions with empirical sub-optimality certificates.


\subsection{Dantzig-Wolfe decomposition}

Note that most constraints in the problem (\ref{eq:6}) only involve a single commodity, we use the Dantzig-Wolfe decomposition \cite{dantzig1960decomposition} to reformulate the ``relatively easy'' constraints.
Specifically, we consider the nonnegativity constraints $\mathbf{f}^k \geq \mathbf{0}$ and the flow conservation constraints $U\mathbf{f}^k = \mathbf{0}$ that are exactly identical for each commodity $k$.
All feasible flow vectors can be treated as points lying on the polyhedron $P = \{ \mathbf{f} \geq \mathbf{0} \ | \ U\mathbf{f} = \mathbf{0} \}$. It is a \emph{cone} and has a single vertex $\mathbf{0}$ and a finite number of rays $\{ \mathbf{r}^1,\ldots,\mathbf{r}^G \}$. By the Minkowski-Weyl theorem \cite{schrijver1998theory}, we can represent a flow vector $\mathbf{f}^k \in P$ as
\begin{equation} \label{eq:7}
\mathbf{f}^k = \sum\nolimits_{g=1}^{G} \lambda_{k,g} \mathbf{r}^{g},
\end{equation}
where $\lambda_{k,g} \geq 0$ is the associated non-negative coefficient. In our case, the rays $\{ \mathbf{r}^1,\ldots,\mathbf{r}^G \}$ form the basis of the null space defined by the constraint matrix $U$ in the flow conservation constraints $U\mathbf{f} = \mathbf{0}$, which correspond to indicator vectors of all possible paths from the source to the sink in our network.

Substituting the equation (\ref{eq:7}) into (\ref{eq:6}), we can rewrite the formulation as
\begin{eqnarray} \label{eq:8}
\begin{split}
&\min_{\bm{\lambda}} \  \sum_{k=0}^{K} \sum_{g=1}^{G} \lambda_{k,g} \left(\left(\mathbf{c}^{k}\right)^{\top} \mathbf{r}^{g}\right) \\
        &s.t. \quad      \sum_{k=0}^{K} \sum_{g=1}^{G} \lambda_{k,g} \mathbf{r}^{g} \leq \mathbf{1}, \\
        &\quad \quad \ \ \forall k, \ \ \sum_{g=1}^{G} \lambda_{k,g} = d_k, \\
        &\quad \quad \ \ \forall k, \forall g, \ \ \lambda_{k,g} \geq 0, \\
        &\quad \quad \ \ \forall k, \forall g, \ \ \lambda_{k,g} \ \textrm{integer}.
\end{split}
\end{eqnarray}
The formulation (\ref{eq:8}) can be seen as a path flow formulation that is equivalent to the original edge flow formulation (\ref{eq:6}). The variable $\lambda_{k,g}$ is interpreted as the $k$-th commodity flow on the path corresponding to $\mathbf{r}^{g}$, indicating whether the path $\mathbf{r}^{g}$ is selected by the $k$-th commodity or not.


\subsection{Column generation}
\label{CG}

Enumerating all possible paths to construct the complete set $\{ \mathbf{r}^1,\ldots,\mathbf{r}^G \}$ leads to a very large number of
variables for optimization. Actually, only a few paths among $\{ \mathbf{r}^1,\ldots,\mathbf{r}^G \}$ is needed to achieve the optimal solution in practice. We thus use the column generation \cite{ford1958suggested} process to dynamically find the critical paths. In the following, we consider the LP relaxation of (\ref{eq:8}), denoted as the master LP (MLP), by removing the integer constraints, and show later how to obtain a near-optimal integer solution.

Formally, the MLP problem can be expressed as
\begin{eqnarray} \label{eq:8-1}
\begin{split}
\textrm{(MLP)} \quad &\min_{\bm{\lambda}} \  \sum_{k=0}^{K} \sum_{g \in \mathcal{I}} \lambda_{k,g} \left(\left(\mathbf{c}^{k}\right)^{\top} \mathbf{r}^{g}\right) \\
        &s.t. \quad      \sum_{k=0}^{K} \sum_{g \in \mathcal{I}} \lambda_{k,g} \mathbf{r}^{g} \leq \mathbf{1}, \\
        &\quad \quad \ \ \forall k, \ \ \sum_{g \in \mathcal{I}} \lambda_{k,g} = d_k, \\
        &\quad \quad \ \ \forall k, \forall g \in \mathcal{I}, \ \ \lambda_{k,g} \geq 0,
\end{split}
\end{eqnarray}
where $\mathcal{I} = \{1,\ldots,G\}$ is the whole index set of all possible paths. The dual problem of the MLP, denoted as DMLP, has the form
\begin{eqnarray} \label{eq:9}
\begin{split}
\textrm{(DMLP)} \quad &\max_{\bm{\pi},\bm{\sigma}} \  - \mathbf{1}^{\top} \bm{\pi} + \sum_{k=0}^{K} d_k \sigma_k \\
        &s.t. \quad      \forall k, \forall g \in I, \ \ -\bm{\pi}^{\top} \mathbf{r}^{g} + \sigma_k \leq \left(\mathbf{c}^{k}\right)^{\top}\mathbf{r}^{g}, \\
        &\quad \quad \ \ \bm{\pi} \geq \mathbf{0},
\end{split}
\end{eqnarray}
where $(\bm{\pi}, \sigma_k)$ are the dual variables of the primal variables $\lambda_{k,g}$. Due to the duality theory, any dual feasible solution of the DMLP provides a lower bound on the MLP, being the fundamental of the column generation algorithm.

Assume that, at the iteration $\tau$, only a subset of paths $\{\mathbf{r}^g\}_{g \in \mathcal{I}_{\tau}}$ with $\mathcal{I}_{\tau} \subset \mathcal{I}$  available. Solving the MLP on the subset $\mathcal{I}_{\tau}$ gives rise to the restricted master linear program (RMLP),
\begin{eqnarray} \label{eq:8-3}
\begin{split}
\textrm{(RMLP)} \quad &\min_{\bm{\lambda}} \  \sum_{k=0}^{K} \sum_{g \in \mathcal{I}_{\tau}} \lambda_{k,g} \left(\left(\mathbf{c}^{k}\right)^{\top} \mathbf{r}^{g}\right) \\
        &s.t. \quad      \sum_{k=0}^{K} \sum_{g \in \mathcal{I}_{\tau}} \lambda_{k,g} \mathbf{r}^{g} \leq \mathbf{1}, \\
        &\quad \quad \ \ \forall k, \ \ \sum_{g \in \mathcal{I}_{\tau}} \lambda_{k,g} = d_k, \\
        &\quad \quad \ \ \forall k, \forall g \in \mathcal{I}_{\tau}, \ \ \lambda_{k,g} \geq 0.
\end{split}
\end{eqnarray}
Let $\lambda_{k,g}^\ast$ and $(\bm{\pi}^\ast, \sigma_k^\ast)$ be the optimal primal and dual solution to the RMLP, respectively. We need to check whether the optimal solution to the RMLP is also optimal for the MLP, and decide whether the current path set $\mathcal{I}_{\tau}$ is needed to be augmented. It can be realized by solving the following \emph{pricing} problem:
\begin{eqnarray} \label{eq:10}
\begin{split}
\zeta_k = \min \left\{ \left(\mathbf{c}^{k} + \bm{\pi}^\ast \right)^{\top}\mathbf{r}^{g} \ \big| \ g \in \mathcal{I} \right\}.
\end{split}
\end{eqnarray}
In our case, the pricing problem turns into a \emph{shortest path} problem with regard to the modified edge cost $(\mathbf{c}^{k} + \bm{\pi}^\ast)$ for the commodity $k$, which can be solved very efficiently by dynamic programming. With the optimal solution $\zeta_k$ to the pricing problem, we have the following proposition.

\begin{proposition}

If $\zeta_k \geq \sigma_k^\ast$ holds for all $k$, the optimal primal solution to the RMLP $\lambda_{k,g}^\ast$ optimally solves the MLP.

\end{proposition}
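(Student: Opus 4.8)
The plan is to invoke LP duality between the MLP (\ref{eq:8-1}) and its dual DMLP (\ref{eq:9}), exhibiting a matched primal--dual feasible pair whose objective values coincide. First I would extend the RMLP primal optimum $\lambda_{k,g}^\ast$ to a solution of the full MLP by padding with zeros: set $\lambda_{k,g} = 0$ for every $g \in \mathcal{I} \setminus \mathcal{I}_\tau$. Because the omitted columns carry zero weight, this padded vector satisfies all constraints of (\ref{eq:8-1}) and attains exactly the RMLP optimal objective value; call it $V$. Thus $V$ is an upper bound on the MLP optimum, and $V$ is also the value of the RMLP dual optimum $(\bm{\pi}^\ast, \sigma_k^\ast)$ by strong duality applied to the restricted pair.

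Next I would show that $(\bm{\pi}^\ast, \sigma_k^\ast)$ is in fact feasible for the \emph{full} DMLP (\ref{eq:9}). The RMLP dual constraints guarantee $-\left(\bm{\pi}^\ast\right)^\top \mathbf{r}^g + \sigma_k^\ast \leq \left(\mathbf{c}^k\right)^\top \mathbf{r}^g$ only for $g \in \mathcal{I}_\tau$; to promote this to all $g \in \mathcal{I}$ I would rewrite the inequality as $\sigma_k^\ast \leq \left(\mathbf{c}^k + \bm{\pi}^\ast\right)^\top \mathbf{r}^g$ and observe that it holds for every $g \in \mathcal{I}$ precisely when $\sigma_k^\ast \leq \min_{g \in \mathcal{I}} \left(\mathbf{c}^k + \bm{\pi}^\ast\right)^\top \mathbf{r}^g = \zeta_k$. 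This is exactly the hypothesis $\zeta_k \geq \sigma_k^\ast$. Combined with $\bm{\pi}^\ast \geq \mathbf{0}$, inherited from RMLP feasibility, this establishes DMLP feasibility. Since DMLP and the RMLP dual share the same objective functional $-\mathbf{1}^\top \bm{\pi} + \sum_{k} d_k \sigma_k$, this feasible point attains the value $V$.

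Finally I would close the argument with weak duality for the MLP/DMLP pair: any MLP-feasible primal objective is at least any DMLP-feasible dual objective. Let $\lambda^{\mathrm{opt}}$ denote a true MLP minimizer. On one hand its value is at most $V$, because the padded $\lambda^\ast$ is MLP-feasible; on the other hand it is at least $V$, by weak duality against the DMLP-feasible point $(\bm{\pi}^\ast, \sigma_k^\ast)$ of value $V$. Hence the MLP optimum equals $V$, so the padded $\lambda_{k,g}^\ast$ is an MLP optimizer, which is the claim.

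The only genuinely delicate step is the second one: recognizing that the pricing quantity $\zeta_k$ is nothing but $\min_{g \in \mathcal{I}} \left(\mathbf{c}^k + \bm{\pi}^\ast\right)^\top \mathbf{r}^g$, and that the stated stopping criterion $\zeta_k \geq \sigma_k^\ast$ is \emph{exactly} the batch of DMLP constraints missing from the restricted dual. Everything else is a routine application of strong duality on the restricted problem and weak duality on the full problem. For completeness I would also note the standing assumption that the RMLP is feasible and bounded, which justifies both the existence of $(\bm{\pi}^\ast, \sigma_k^\ast)$ and the strong-duality equality used above.
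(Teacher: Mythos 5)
Your proof is correct and follows essentially the same route as the paper's own argument: pad the RMLP primal optimum with zeros to obtain an MLP-feasible point (upper bound), use the stopping condition $\zeta_k \geq \sigma_k^\ast$ to promote the RMLP dual optimum $(\bm{\pi}^\ast, \sigma_k^\ast)$ to a DMLP-feasible point, and close with weak duality on the full pair together with strong duality on the restricted problem. If anything, your write-up is slightly more explicit than the paper's about the strong-duality step and the standing feasibility/boundedness assumption on the RMLP.
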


\begin{proof}

Given the optimal primal solution to the RMLP $\lambda_{k,g}^\ast$, we can validate that $\lambda_{k,g}^\ast$ is a feasible solution to the MLP by setting $\lambda_{k,g} = 0$ for those paths not in the current set $\mathcal{I}_{\tau}$. Therefore, the optimal value of the RMLP gives an upper bound on the MLP,
\begin{eqnarray} \label{eq:8-4}
\begin{split}
v(RMLP) \geq v(MLP),
\end{split}
\end{eqnarray}
where $v(RMLP)$ and $v(MLP)$ are the optimal value of the RMLP and the MLP, respectively.

Due to the definition of the pricing problem~(\ref{eq:10}), when $\zeta_k \geq \sigma_k^\ast$ holds for all $k \in \{0,1,\ldots,K\}$, we have
\begin{eqnarray} \label{eq:8-5}
\begin{split}
\forall k, \quad \zeta_k = \min \left\{ \left(\mathbf{c}^{k} + \bm{\pi}^\ast \right)^{\top}\mathbf{r}^{g} \ \big| \ g \in \mathcal{I} \right\} \geq \sigma_k^\ast.
\end{split}
\end{eqnarray}
It can be rewritten as
\begin{eqnarray} \label{eq:8-6}
\begin{split}
\forall k, \ \forall g \in \mathcal{I}, \quad -\bm{\pi}^{\ast\top} \mathbf{r}^{g} + \sigma_k^\ast \leq \left(\mathbf{c}^{k}\right)^{\top}\mathbf{r}^{g},
\end{split}
\end{eqnarray}
which implying that the optimal dual solution to the RMLP $(\bm{\pi}^\ast, \sigma_k^\ast)$  is also a feasible solution to the DMLP given by (\ref{eq:9}). Due to the duality theory, the solution $(\bm{\pi}^\ast, \sigma_k^\ast)$ provides a lower (dual) bound on the MLP, we therefore have
\begin{eqnarray} \label{eq:8-7}
\begin{split}
v(RMLP) \leq v(MLP).
\end{split}
\end{eqnarray}
Note that the above equation use the fact that the optimal primal solution $\lambda_{k,g}^\ast$ and the optimal dual solution $(\bm{\pi}^\ast, \sigma_k^\ast)$ to the RMLP give the exactly same optimal value of the objective function.

With the equations (\ref{eq:8-4}) and (\ref{eq:8-7}), we can conclude that the RMLP and the MLP have the same optimal value if $\zeta_k \geq \sigma_k^\ast$ holds for all $k$. Therefore, the optimal primal solution to the RMLP $\lambda_{k,g}^\ast$ optimally solves the MLP. This completes the proof.

\end{proof}

If the condition of the Proposition 1 is not satisfied, \ie, $\zeta_k < \sigma_k^\ast$ for some $k$, the shortest path $\tilde{\mathbf{r}}_k$ provided by the pricing problem (\ref{eq:10}) has a \emph{negative reduced cost}. We introduce $\tilde{\mathbf{r}}_k$ into the subset $\mathcal{I}_{\tau}$, and repeat the process for the next iteration to decrease the objective value of the MLP.


To obtain a near-optimal integer solution to the ILP~(\ref{eq:8}), one can retain the feasible solution with the minimum objective value once the RMLP provides an integer solution during the column generation process (which happens very frequently in practice). Since the optimal solution to the MLP gives a lower bound for the ILP, the difference between the objective value of the returned integer solution and the lower bound is thus an upper bound certificate on its sub-optimality. In our experiments, we obtained small sub-optimality certificates for the returned integer solutions, indicating that our optimization algorithm based on column generation is stable, as summarized in Algorithm~\ref{algorithm:CG}.

\begin{algorithm}[t]
\caption{\ The Hybrid Data Association via Column Generation}
\label{algorithm:CG}
\begin{algorithmic}[1]
\REQUIRE the edge costs $\mathbf{c}^k$ and track numbers $d_k$ for all commodities $k = 0,1,\ldots,K$.
\ENSURE the near-optimal integer solution $\{\mathbf{f}^k\}_{k=0}^K $ to the problem~(\ref{eq:6}) and its sub-optimality certificate $\epsilon$.
\STATE \textbf{Initialize:} \ the initial path set $\mathcal{I}_1$ consists of the shortest paths of all commodities with regard to the edge cost $\mathbf{c}^{k}$.
\FOR{$\tau = 1$ to ITERMAX}
\STATE solve the RMLP defined by (\ref{eq:8-3}) on $\mathcal{I}_{\tau}$ to get the optimal primal and dual solution $\lambda_{k,g}^{\ast} $, $(\bm{\pi}^\ast, \sigma_k^\ast)$;
\STATE $/*$ retain the integer solution $*/$
\IF{$\lambda_{k,g}^{\ast} $ are integer}
\STATE $v(ILP) = v(RMLP)$;
\STATE $\tilde{\lambda}_{k,g} = \lambda_{k,g}^{\ast} $;
\ENDIF
\STATE $/*$ find shortest paths $*/$
\FOR{$k = 0,\ldots,K$}
\STATE $\tilde{\mathbf{r}}_k = \mathop{\arg\min}\nolimits_{\{\mathbf{r}^{g}|g \in \mathcal{I}\}} \left(\mathbf{c}^{k} + \bm{\pi}^\ast \right)^{\top}\mathbf{r}^{g}$;
\STATE $\zeta_k = (\mathbf{c}^{k} + \bm{\pi}^\ast )^{\top}\tilde{\mathbf{r}}_{k}$;
\ENDFOR
\STATE $/*$ optimality check $*/$
\IF{$\zeta_k \geq \sigma_k^\ast$ holds for all $k$}
\STATE break;
\ENDIF
\STATE $/*$ augment the path set $*/$
\STATE $\mathcal{I}_{\tau+1} = \mathcal{I}_{\tau} \cup \left\{\tilde{\mathbf{r}}_k| \forall k, \zeta_k < \sigma_k^\ast\right\}$.
\ENDFOR
\RETURN $\mathbf{f}^k = \sum_{g \in I_{\tau}} \tilde{\lambda}_{k,g} \mathbf{r}^{g} $, $\epsilon = v(ILP) - v(RMLP)$.
\end{algorithmic}
\end{algorithm}

\section{Experiments}
\label{experiment}

In this section, we evaluate our approach on real world videos to demonstrate its effectiveness. Specifically, the performance of our approach is analyzed in three aspects. (i) We evaluate the influence of the length of the temporal window,\ie, $\Delta t$ on multi-object tracking performance for our hybrid   data association framework; (ii) We compare the column generation (CG) solver introduced in this paper and the exact integer linear programming (ILP) solver in terms of sub-optimality, convergence speed, and MOTA score; (iii) We show that our approach produces superior tracking results over the state-of-the-art via both quantitative and qualitative evaluation.

\begin{figure*}[t]
    \centering
    \subfigure[MOTA]{
    \includegraphics[width=0.31\textwidth]{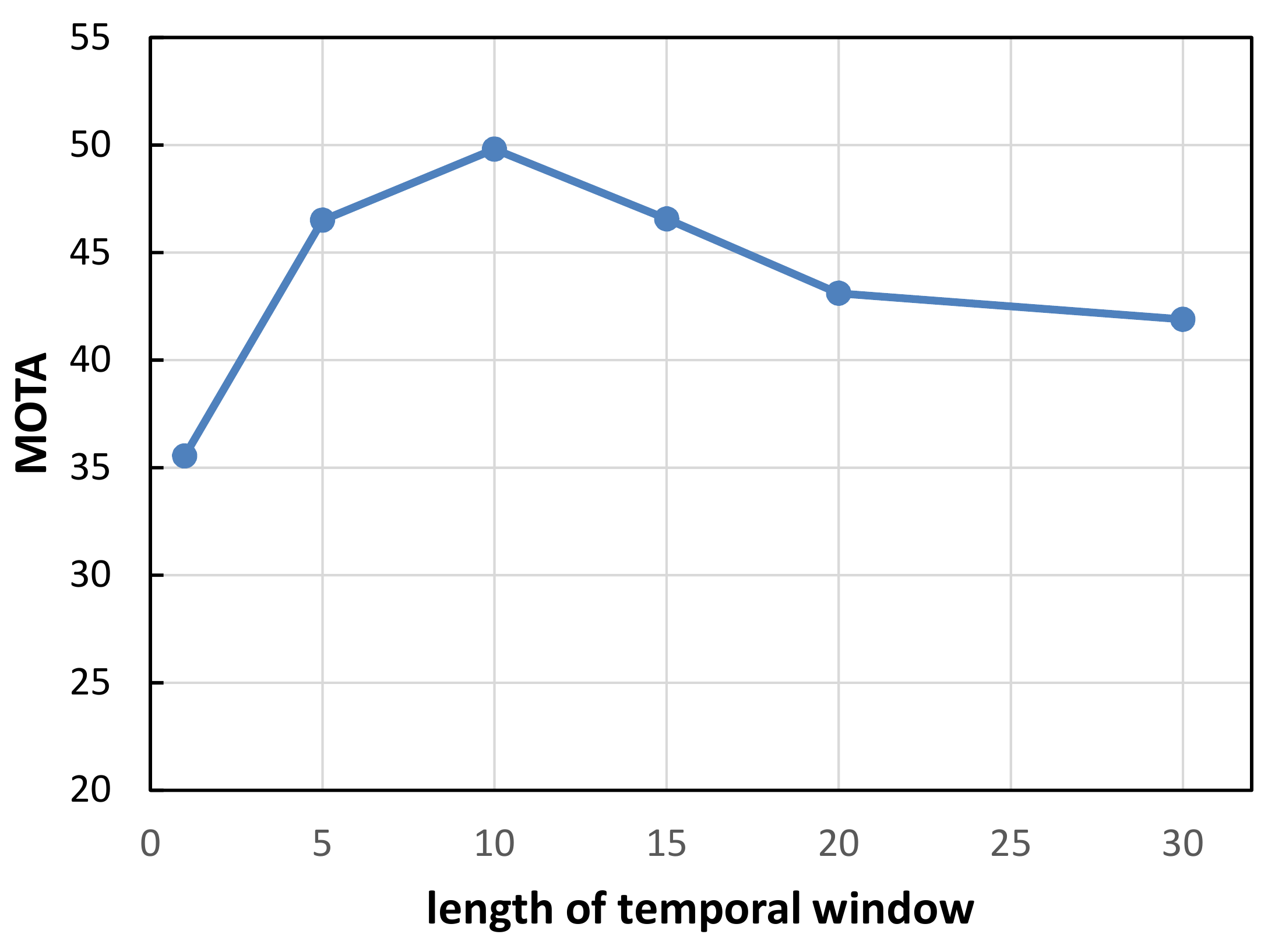}}
    \subfigure[IDS]{
    \includegraphics[width=0.31\textwidth]{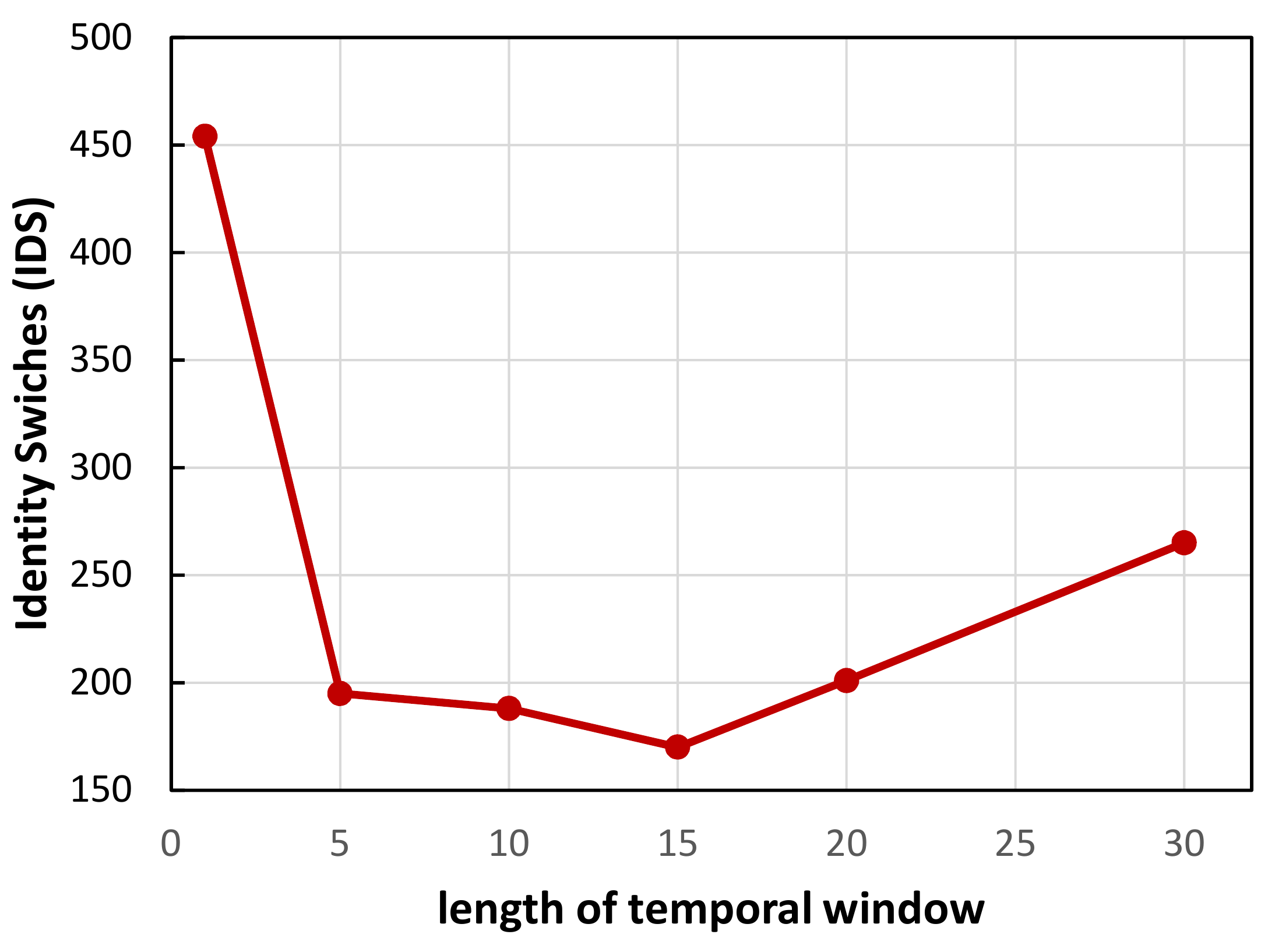}}
    \subfigure[FG]{
    \includegraphics[width=0.31\textwidth]{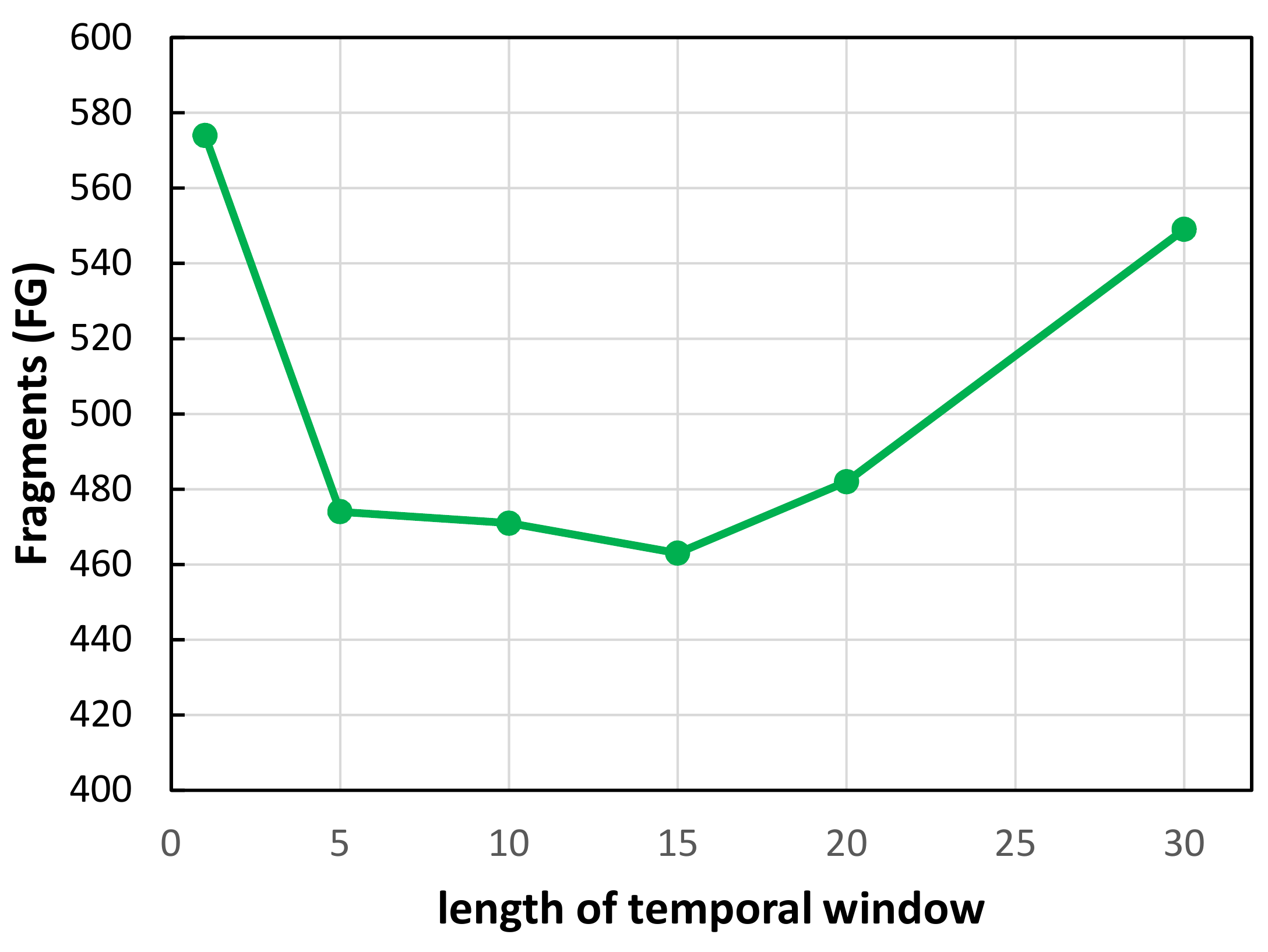}}
    \vspace{-0pt}
    \caption{Influence of the length of the temporal window ($\Delta t$) on tracking performance, the MOTA, IDS, and FG scores on the PETS dataset are plotted. }
    \vspace{-0pt}
    \label{comparison-window}
\end{figure*}

\subsection{Datasets}

We use two publicly available benchmark datasets, \ie, the \emph{PETS 2009} dataset and the \emph{MOTChallenge 2015} dataset, for performance evaluation. The details are listed as follows.

\subsubsection{PETS 2009}

The \emph{PETS 2009} dataset \cite{ellis2009pets} shows an outdoor scene where numerous pedestrians enter, exit, and interact with each other frequently. The images of the dataset are recorded in $768 \times 576$ pixels at $7$ fps. The major challenges of this dataset are frequent occlusions either caused by people interaction or static occlusions due to a traffic sign. Additionally to the widely used \emph{S2L1} and \emph{S2L2} sequence, we also evaluate our approach on the more challenging \emph{S2L3} sequence that captures much denser crowds. The input detections and ground truth of these sequences are from Milan \etal~\cite{Milan:2014:CEM}.

In our experiments, we use the \emph{PETS 2009} dataset for diagnosis analysis, including the investigation of the influence of the critical parameter $\Delta t$ (see Section~\ref{aspect1}) and the comparison between the proposed CG solver and the ILP solver (see Section~\ref{aspect2}). The reason is that the \emph{S2L1}, \emph{S2L2}, and \emph{S2L3} sequences from the \emph{PETS 2009} dataset, respectively, correspond to three representative application scenarios of multi-object tracking with low, high, and crowded object densities.

\subsubsection{MOTChallenge 2015}

The \emph{MOTChallenge 2015} dataset gathers various existing and new challenging video sequences to evaluate the performance of multi-object tracking methods. Since our method performs tracking on the image coordinate, we use the \emph{2D MOT 2015} sequences in the \emph{MOTChallenge 2015}. The sequences are composed of $11$ training and $11$ testing video sequences in which the challenges include camera motion, low viewpoint, varying frame rates, and server weather condition. The training sequences contain over $5500$ frames ($\sim 7$ minutes) and $500$ annotated trajectories ($39905$ bounding boxes). The benchmark releases the ground truth of the training sequences publicly, and thus one can use the training sequences to determine the set of system parameters. The testing sequences contains over $5700$ frames ($\sim 10$ minutes) and $721$ annotated trajectories ($61440$ bounding boxes), while the annotations are not available to avoid (over)fitting of the competing methods to the specific sequences.

Since it is hard for methods to finetune on such a large amount of data, we use the $11$ testing sequences from the \emph{MOTChallenge 2015} dataset for quantitative comparison against various state-of-the-art trackers in our experiments (see Section~\ref{aspect3}). Moreover, the tracking results of all competing methods are automatically evaluated by the benchmark and the performance scores publicly online, making the quantitative comparison strictly fair.

\subsection{Evaluation Metrics}

We use the widely accepted CLEAR MOT performance metrics \cite{keni2008evaluating} for performance evaluation which include the multiple object tracking precision (MOTP$\uparrow$) that measures average overlap rate between estimated trajectories and the ground truth,  the multiple object tracking accuracy (MOTA$\uparrow$) that is a cumulative accuracy combining false positives (FP$\downarrow$), false negatives (FN$\downarrow$) and identity switches (IDS$\downarrow$). We also report performance scores defined by Li \etal~\cite{li2009learning}, including the percentage of mostly tracked (MT$\uparrow$) ground truth trajectories, the percentage of mostly lost (ML$\downarrow$) ground truth trajectories, and the number of times that a ground truth trajectory is interrupted (Frag$\downarrow$). To be specific, a ground truth trajectory is determined to be mostly tracked if and only if it is covered by the estimated trajectories with percentage larger than $80\%$, while a ground truth trajectory is determined to be mostly lost when the coverage percentage is less than $20\%$. Additionally, we report the false positive ratio to account for the accuracy of identifying true targets, which is measured by the number of false alarms per frame (FAF$\downarrow$). Here, $\uparrow$ means that higher scores indicate better results, and $\downarrow$ represents that lower is better.

\subsection{Appearance feature}
\label{cnn}

As for the appearance features, we utilize the region-based CNN features proposed in \cite{girshick2016region}, where the deep neural network is trained on the ImageNet dataset and fine-tuned on the PASCAL VOC dataset. To obtain a more generic deep representation, we follow the strategy in \cite{babenko2015aggregating} to use sum pooling to aggregate the output of the last convolutional layer, rather than directly use the features from the last fully-connected layer. For each detection region, the final feature vector is $256$-dimensional with better time and space complexity. Considering that objects of interest tend to be located close to the geometrical center of an image, we also apply the centering prior to the sum pooling strategy to improve the accuracy, which assigns larger weights to the features from the center of the region.

\subsection{Influence of large temporal window}
\label{aspect1}

The length of the temporal window ($\Delta t$) determines the number of video frames in which the existing trajectories can find their associations, and thus is critical for the proposed hybrid   association framework. Intuitively, taking more frames into account should be helpful for handling inaccurate detections and occlusions. To study the influence of $\Delta t$ on multi-object tracking performance, we conduct an experiment with $\Delta t = \{1,5,10,15,20,30\}$ on the PETS dataset. Fig.~\ref{comparison-window} shows the MOTA, IDS, and FG scores as a function of $\Delta t$.

We can observe from Fig.~\ref{comparison-window} that enlarging the temporal window improvers the overall performance and apparently reduces the number of ID switches and trajectory fragments, especially compared with the purely local method when the length of temporal window is set to $\Delta t = 1$. This result indicates the importance of the data association across multiple frames which our hybrid   data association framework can leverage. As we claimed, integrating the local target-specific model with the global optimization over multiple frames is able to alleviate the irrecoverable errors caused by making decision with only local information. Inaccuracy brought by false alarms and short-term occlusions can be exactly resolved to improve the multi-object tracking performance.

\begin{figure*}[t]
    \centering
    \subfigure[PETS-S2L1]{
    \includegraphics[width=0.31\textwidth]{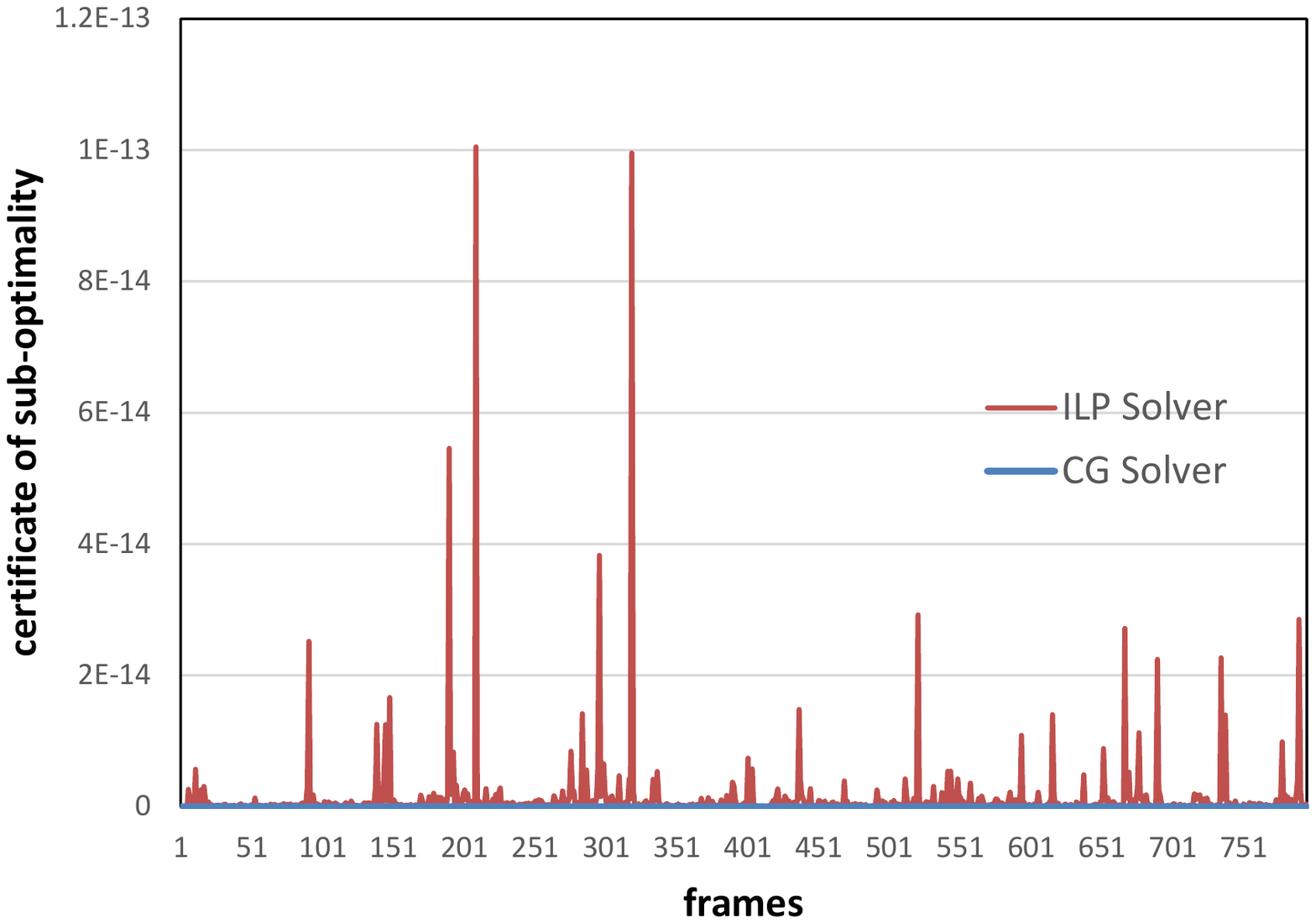}}
    \subfigure[PETS-S2L2]{
    \includegraphics[width=0.31\textwidth]{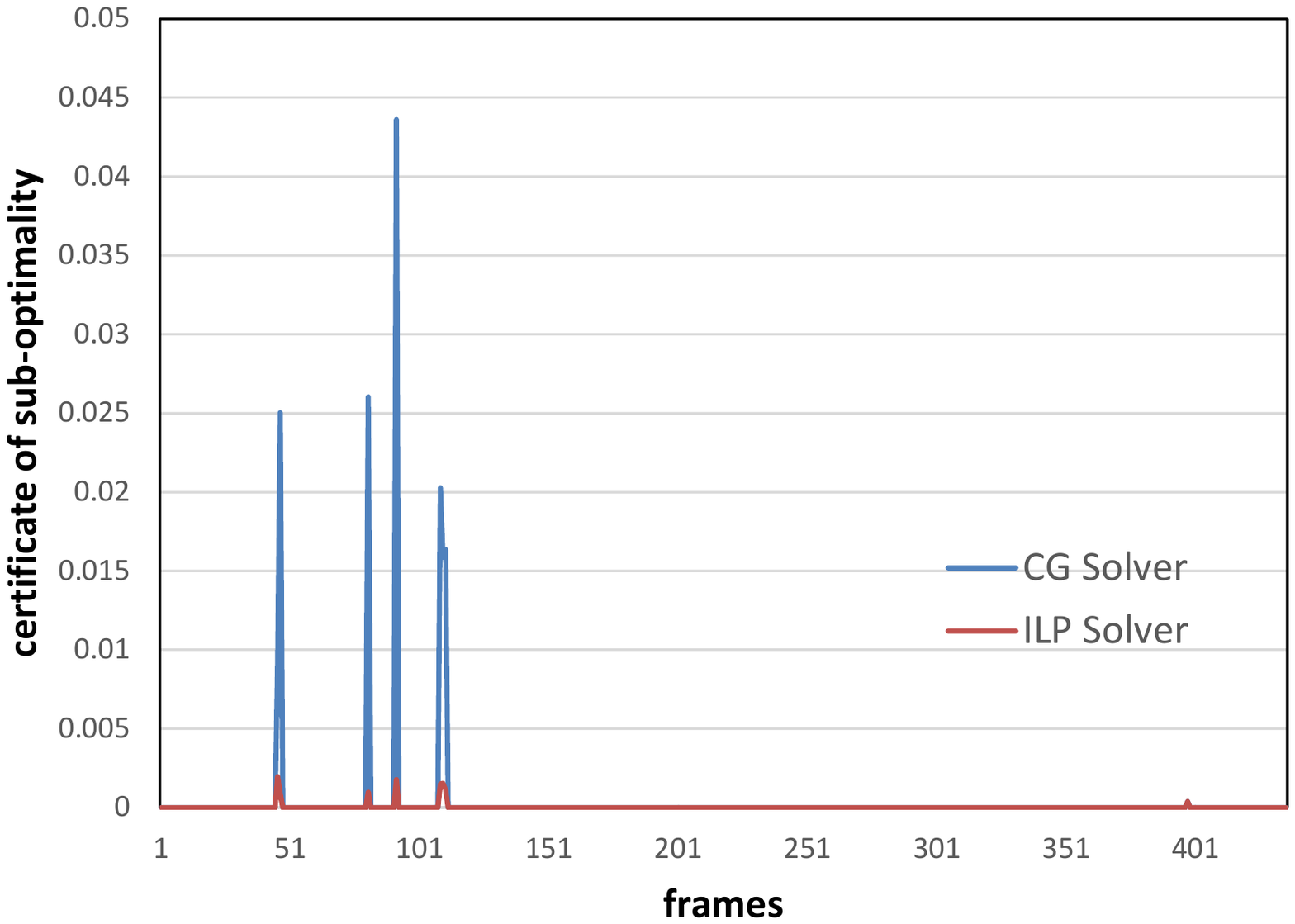}}
    \subfigure[PETS-S2L3]{
    \includegraphics[width=0.31\textwidth]{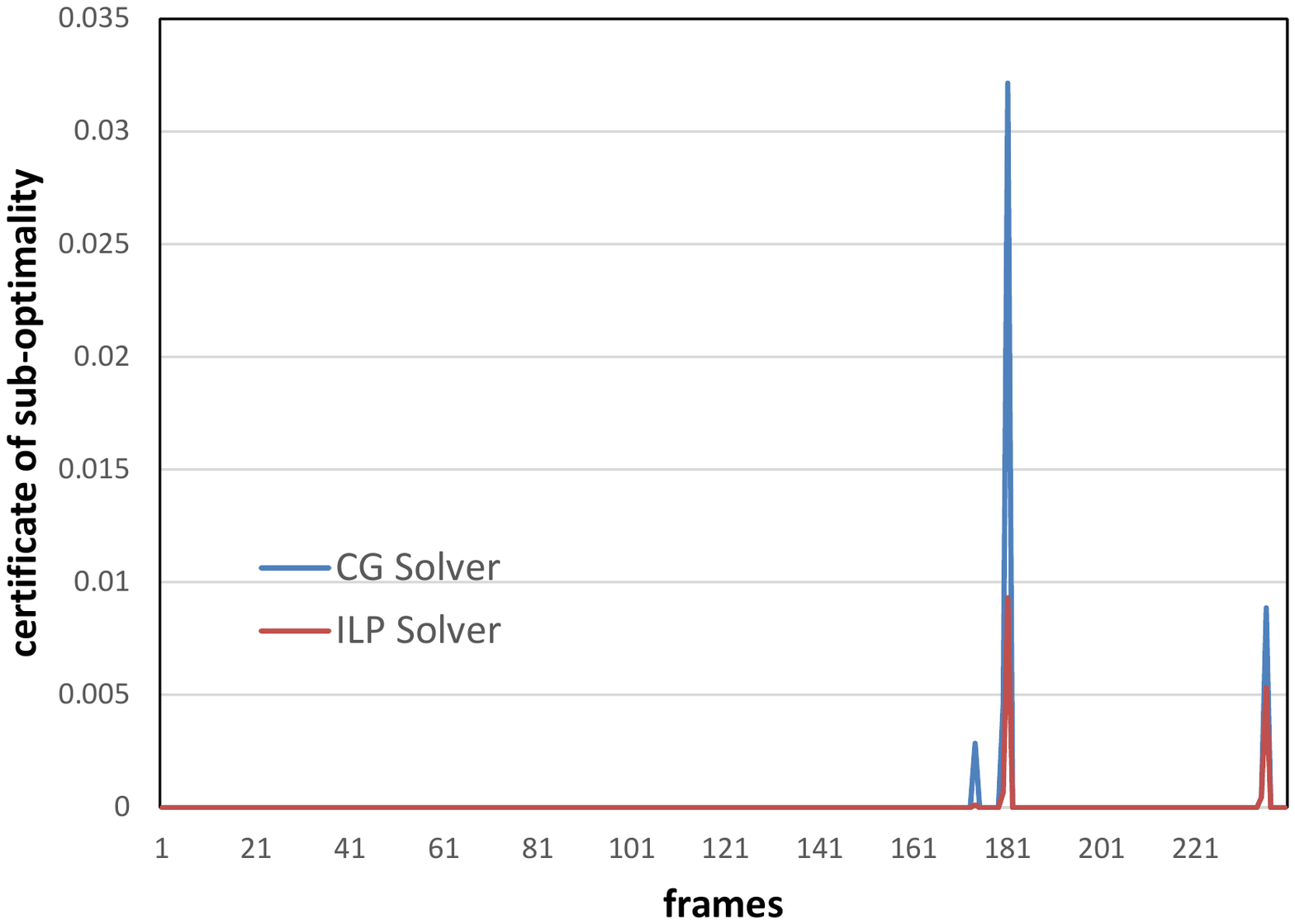}}
    \vspace{-0pt}
    \caption{Sub-optimality comparison between the CG and ILP solvers. The sub-optimality certificates are reported for each frame of the PETS-S2L1, PETS-S2L2, and PETS-S2L3 sequences, respectively. The certificates provided by the CG solver are quite small (equal to zero in most of the cases) and comparable to the ILP solver, indicating that the CG solver is stable}
    \vspace{-0pt}
    \label{comparison-solver}
\end{figure*}

On the other hand, the performance decreases when the temporal window is unduly large ($> 20$). The reason is that the local consistency enforced by target-specific models becomes inaccurate with a long temporal distance. Specifically, due to appearance variations, the target-specific similarity functions obtained by online learning might be inaccurate when they are used to evaluate the object appearances coming from the future. Minimizing the edge costs in the multi-commodity network is therefore unstable to produce consistent flows (trajectories). Similarly, the constant velocity model used to estimate the track start cost might provide unstable long term predictions and thus degrades the tracking accuracy.
To achieve a tradeoff between local consistency and global association, we set $\Delta t = 10$ for our hybrid data association approach and keep it fixed throughout the following experiments.

\subsection{Solver comparison}
\label{aspect2}

\begin{table}[t]
\begin{center}
\caption{Comparison of tracking performance and convergence speed of the CG and ILP solvers on the PETS dataset.}
\vspace{-0pt}
\label{solver}
\small
\begin{tabular}{c|c|c|c|c}
\hline
\multirow{2}{*}{Video}&\multicolumn{2}{c|}{CG Solver}&\multicolumn{2}{c}{ILP Solver} \\
\cline{2-5}
                      &Run time (s)&    MOTA (\%)     &Run time (s)&    MOTA (\%)     \\
\hline
PETS-S2L1     &$0.0276$       &$82.6$       &$0.0296$       &$82.0$        \\
PETS-S2L2     &$0.1571$       &$44.2$       &$0.2131$       &$41.6$        \\
PETS-S2L3     &$0.2381$       &$29.5$       &$0.8156$       &$29.3 $       \\
\hline
\end{tabular}
\vspace{0pt}
\end{center}
\end{table}

In this paper, we introduce a column generation (CG) based solver to the min-cost multi-commodity flow problem in terms of multi-object tracking. Alternatively, one can solve the problem directly using existing integer linear programming packages. To demonstrate the superiority of the proposed CG solver over the standard ILP solver, we report the sub-optimality certificates of the solutions provided by both the CG solver and the ILP solver for the \emph{PETS} dataset in Fig.~\ref{comparison-solver}. The sub-optimality certificates are computed as described in Section~\ref{CG}. For the ILP solver, we employ the commercial software Gurobi which represents the state of the art in ILP.

Overall, the certificates provided by the CG solver are quite small (equal to zero in most of the cases) and comparable to the ILP solver, indicating that the CG solver is stable. As can be observed in Fig.~\ref{comparison-solver}(a), the CG solver provides zero certificates on each frame of the \emph{PETS-S2L1} sequence, while the ILP solver provides certificates much close to zero. It demonstrates that the CG solver exactly finds the optimal integer solution to the min-cost multi-commodity flow problem when the ILP has a tight relaxation to a LP. For the situations where the ILP is not equivalent to a LP, caused by the close interactions of multiple objects, the CG solver provides a near-optimal solution in an efficient way by using a column generation process, as shown in Fig.~\ref{comparison-solver}(b) and Fig.~\ref{comparison-solver}(c).

To further demonstrate the superiority of the CG solver in terms of multi-object tracking, we report the tracking performance (the MOTA score) and convergence speed (the average run time per frame) of both the CG solver and the ILP solver for the three sequences with varying object densities in the \emph{PETS} dataset. Results are shown in Table \ref{solver}. As can be observed, the CG solver achieves better results compared with ILP with significantly faster speed. For each sequence, the CG solver achieves higher MOTA scores than the ILP solver, indicating that the near-optimal solutions produced by the CG solver are much more meaningful for multi-object tracking. It owes to the path-flow reformulation involved in the CG solver which conducts a direct connection between the solution and the estimated trajectories. Furthermore, favorable convergence speed is provided by the CG solver even though the number of objects increases quickly from the sequence PETS-S2L1 ($\thicksim5$ objects per frame) to PETS-S2L3 ($\thicksim30$ objects per frame).

\subsection{Comparison with the state-of-the-art}
\label{aspect3}

We now compare our approach with the state-of-the-art methods on the \emph{MOTChallenge 2015} dataset. The state-of-the-art methods are selected with available corresponding publications at the time of our submission to the test bench, including TC\_ODAL~\cite{BaeY2014robust}, RMOT~\cite{yoon2015bayesian}, MDP~\cite{xiang2015learning}, SCEA~\cite{hong2016online}, TDAM~\cite{yang2016temporal}, DP\_NMS~\cite{pirsiavash2011globally}, SMOT~\cite{dicle2013way}, TBD~\cite{geiger20143d}, CEM~\cite{Milan:2014:CEM}, MotiCon~\cite{leal2014learning}, SegTrack~\cite{milan2015joint}, MHT\_DAM~\cite{kim2015multiple}, JPDA\_m~\cite{hamid2015joint}, TSMLCDE~\cite{wang2016tracklet}, and NOMT~\cite{choi2015NOMT}. Note that the TC\_ODAL, RMOT, MDP, SCEA and TDAM trackers are local data-association methods, the NOMT tracker and our approach perform data association in a hybrid way, while the other trackers are global data-association methods.

\begin{table*}[t]
\begin{center}
\caption{Quantitative comparison results of our approach (denoted as HybridDAT) with other state-of-the-art methods on the \emph{MOTChallenge 2015} dataset. We group the result listings into local, global, and hybrid methods. \txtred{Bold} scores highlight the best results while \txtblu{italic} scores indicate the second best ones. (accessed on 7/6/2016)}
\vspace{-0pt}
\label{quantitative}
\begin{tabular}{c|c|c|c|c|c|c|c|c|c|c}
\hline
 &\textbf{Method} & \textbf{MOTA}[\%]$\uparrow$ & \textbf{MOTP}[\%]$\uparrow$ & \textbf{FAF}$\downarrow$ & \textbf{MT}[\%]$\uparrow$ & \textbf{ML}[\%]$\downarrow$ & \textbf{FP}$\downarrow$ & \textbf{FN}$\downarrow$ & \textbf{IDS}$\downarrow$ & \textbf{FG}$\downarrow$ \\
\hline\hline
&TC\_ODAL~\cite{BaeY2014robust}
&       {$15.1 \pm 15.0$}&       {$70.5$}&       { $2.2$}&       { $3.2$}&       {$55.8$}&       {$12,970$}&       {$38,538$}&       {  $637$}&       {$1,716$}  \\
&RMOT~\cite{yoon2015bayesian}
&       {$18.6 \pm 17.5$}&       {$69.6$}&       { $2.2$}&       { $5.3$}&       {$53.3$}&       {$12,473$}&       {$36,835$}&       {  $684$}&       {$1,282$}  \\
&MDP~\cite{xiang2015learning}
&       {$30.3$ $\pm$ $14.6$}&       {$71.3$}&       { $1.7$}&       {$13.0$}&\txtred{38.4}&       { $9,717$}&       {$32,422$}&       {  $680$}&     {$1,500$}  \\
&TDAM~\cite{yang2016temporal}
&       {$33.0$ $\pm$  $9.8$}&\txtred{72.8}&       { $1.7$}&       {$13.3$}&\txtblu{39.1}&       {$10,065$}&\txtred{30,617}&       {  $464$}&       {$1,506$}  \\
\multirow{-5}{*}{\textit{Local}}
&SCEA~\cite{hong2016online}
&       {$29.1$ $\pm$ $12.2$}&       {$71.1$}&\txtred{ 1.0}&       { $8.9$}&       {$47.3$}&\txtred{ 6,060}&       {$36,912$}&       {  $604$}&       {$1,182$}  \\
\hline
\hline
\multirow{10}{*}{\textit{Global}}
&DP\_NMS~\cite{pirsiavash2011globally}
&       {$14.5$ $\pm$ $13.9$}&       {$70.8$}&       { $2.3$}&       { $6.0$}&       {$40.8$}&       {$13,171$}&       {$34,814$}&       {$4,537$}&       {$3,090$}  \\
&SMOT~\cite{dicle2013way}
&       {$18.2$ $\pm$ $10.3$}&       {$71.2$}&       { $1.5$}&       { $2.8$}&       {$54.8$}&       { $8,780$}&       {$40,310$}&       {$1,148$}&       {$2,132$}  \\
&TBD~\cite{geiger20143d}
&       {$15.9$ $\pm$ $17.6$}&       {$70.9$}&       { $2.6$}&       { $6.4$}&       {$47.9$}&       {$14,943$}&       {$34,777$}&       {$1,939$}&       {$1,963$}  \\
&CEM~\cite{Milan:2014:CEM}
&       {$19.3$ $\pm$ $17.5$}&       {$70.7$}&       { $2.5$}&       { $8.5$}&       {$46.5$}&       {$14,180$}&       {$34,591$}&       {  $813$}&       {$1,023$}  \\
&MotiCon~\cite{leal2014learning}
&       {$23.1$ $\pm$ $16.4$}&       {$70.9$}&       { $1.8$}&       { $4.7$}&       {$52.0$}&       {$10,404$}&       {$35,844$}&       {$1,018$}&       {$1,061$}  \\
&SegTrack~\cite{milan2015joint}
&       {$22.5$ $\pm$ $15.2$}&       {$71.7$}&       { $1.4$}&       { $5.8$}&       {$63.9$}&       { $7,890$}&       {$39,020$}&       {  $697$}&\txtred{  737}  \\
&MHT\_DAM~\cite{kim2015multiple}
&       {$32.4$ $\pm$ $15.6$}&       {$71.8$}&       { $1.6$}&\txtred{16.0}&       {$43.8$}&       { $9,064$}&       {$32,060$}&       {  $435$}&       {  $826$}  \\
&JPDA\_m~\cite{hamid2015joint}
&       {$23.8$ $\pm$ $15.1$}&       {$68.2$}&\txtblu{ 1.1}&       { $5.0$}&       {$58.1$}&\txtblu{ 6,373}&       {$40,084$}&\txtblu{ 365}&       {  $869$}  \\
&TSMLCDE~\cite{wang2016tracklet}
&\txtblu{34.3$\pm$13.1}&       {$71.7$}&       { $1.4$}&\txtblu{14.0}&       {$39.4$}&       { $7,869$}&       {$31,908$}&       { $618$}&       { $ 959$}  \\
\hline
\hline
&NOMT~\cite{choi2015NOMT}
&       {$33.7$ $\pm$ $16.2$}&       {$71.9$}&       { $1.3$}&       {$12.2$}&       {$44.0$}&       { $7,762$}&       {$32,547$}&       {  $442$}&\txtblu{ $ 823$}  \\
\multirow{-2}{*}{\textit{Hybrid}}
&HybridDAT
&\txtred{35.0$\pm$15.0}&\txtblu{72.6}&       { $1.5$}&       {$11.4$}&       {$42.2$}&       { $8,455$}&\txtblu{31,140}&\txtred{  358}&       {$1,267$} \\
\hline
\end{tabular}
\vspace{0pt}
\end{center}
\end{table*}

Table~\ref{quantitative} lists detailed quantitative comparison results on the \emph{MOTChallenge 2015} dataset, where the results are grouped into local, global, and hybrid data-association methods \footnote{The comparison is also available at the website of the MOTChallenge \url{http://motchallenge.net/results/2D_MOT_2015/}.}.
With only the provided detections and a simple dynamic model, our approach shows very competitive performance with the best MOTA score. It demonstrates that our approach performs favorable over the state-of-the-art and is suitable for various unconstrained environments. In particular, the MOTA score and the number of ID switches are substantially improved compared with both local and global data-association methods. It is ascribed to the hybrid   data association framework that is able to find optimal associations for the existing trajectories over multiple video frames. Errors caused by inaccurate detections and occlusions, which are the most challenging issues in complex scenes, are significantly alleviated by our approach to produce consistent trajectories.

As expected, hybrid data-association methods performs better than both local and global methods by a large margin. This superior performance is mainly due to the integration of local target-specific models and global optimization over multiple frames. Compared with the local methods, hybrid data association takes multiple frames into account and therefore is much more stable against noise when association decisions are made. Moreover, compared with the global methods, hybrid data association utilizes the local target-specific models to ensure the local consistency of estimated trajectories, meanwhile retains the ability to handle online data. Benefitting from the superiority of hybrid data association, the NOMT tracker also achieves good scores on the challenging dataset, as we can observed in Table~\ref{quantitative}. In contrast, our approach produces apparently lower FN and IDS scores with a reasonable number of false alarms, and thus provides a better MOTA score. This is because that our min-cost multi-commodity flow formulation  models the multi-object tracking problem in a compact form and enables the efficient near-optimal solution to obtain more accurate trajectories.

On the other hand, our approach produces slightly more fragmented trajectories in return. The reason is that our approach can perform multi-object tracking in an online manner, even though the global optimization over multiple frames are involved. Our approach tends to terminate the trajectory when it has no associated detections in the future frames and thus increases the FG scores. The number of ID switches is significantly reduced due to the consideration of multiple future frames, as shown in Table~\ref{quantitative}.

Several qualitative examples of tracking results produced by our approach on the \emph{MOTChallenge 2015} are shown in Fig.~\ref{sampleresults}. Consistency of the estimated trajectories is indicated by bounding boxes of the same color on the same object over time. Our method is able to accurately track the objects against the inference of abundant false positive detections, short-term occlusions, abrupt motions etc. (Videos suitable for qualitative evaluation of the results across all frames are available at the website of the MOTChallenge \url{http://motchallenge.net/results/2D_MOT_2015/}, as well as the detailed tracking results provided by our approach and the state-of-the-art algorithms.)

\begin{figure*}
    \centering
    \includegraphics[width=0.95\textwidth]{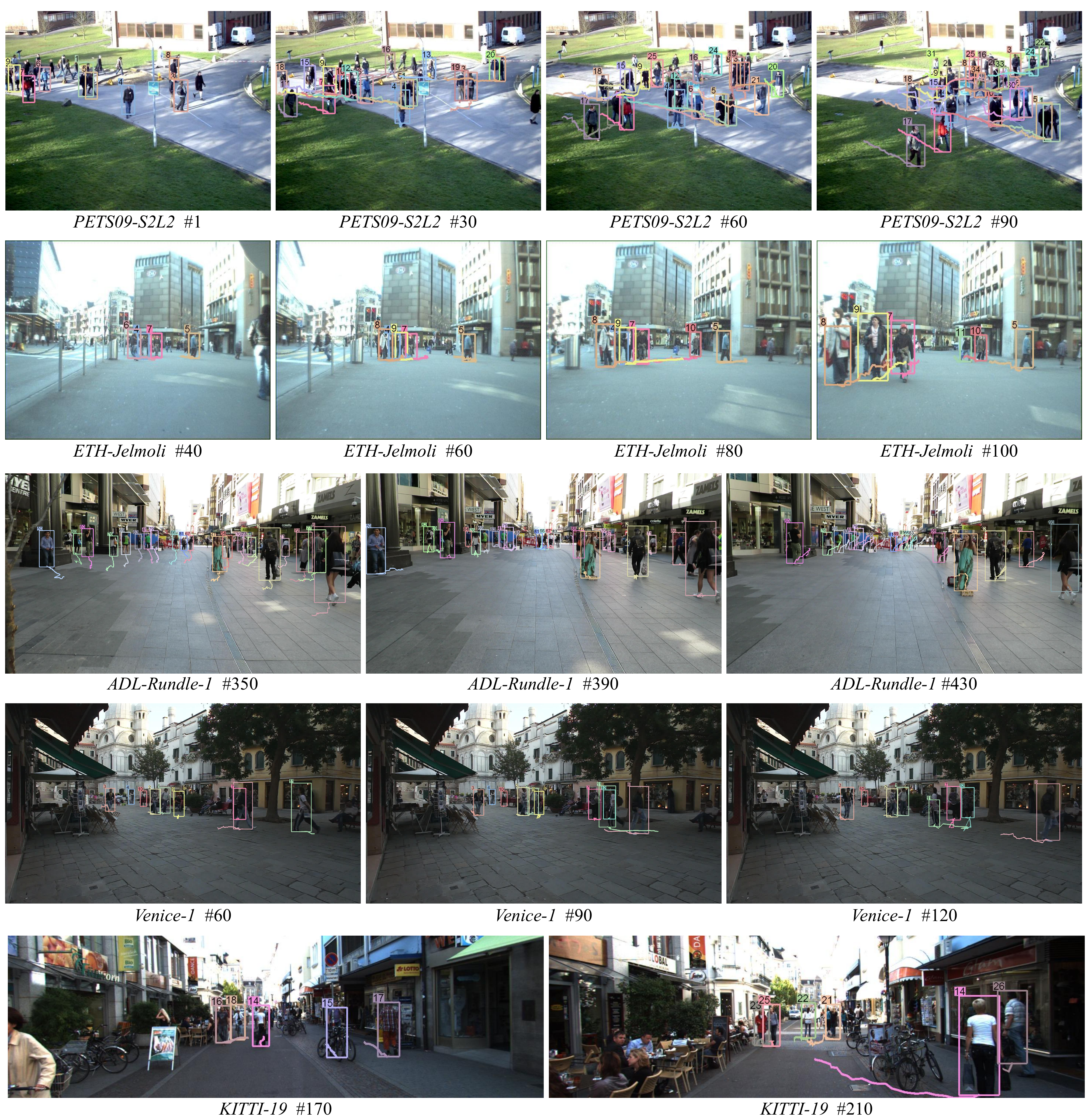}
    \vspace{-0pt}
    \caption{Sample tracking results of our approach on five representative testing video sequences of the MOTChallenge 2015 dataset (\ie, \emph{PETS09-S2L2}, \emph{ETH-Jelmoli}, \emph{ADL-Rundle-1}, \emph{Venice-1}, and \emph{KITTI-19}). At each frame, we show the bounding boxes together with the past trajectories (last 30 frames). The color of the bounding boxes and trajectories indicates the ID of the tracked objects. Best viewed in color. (Refer to the tracking videos for more detailed results.)}
    \label{sampleresults}
\end{figure*}

\section{Conclusion}
\label{conclusion}
In this paper, we have proposed a hybrid data association framework for multi-object tracking. Instead of only considering local associations between adjacent video frames, we explored the superior abilities of global optimization over multiple frames to carry out online tracking. It was formulated as a min-cost multi-commodity flow problem where the local target-specific information is modeled to cooperate with the global association. We employed a powerful online similarity learning algorithm to explicitly build target-specific appearance models to compute the edge costs of our multi-commodity network, improving the discriminative ability of the framework. In addition, we introduced an efficient and effective solution with empirical sub-optimality certificates, and validated its superiority in terms of multi-object tracking. Extensive experiments on various challenging datasets have demonstrated that our approach outperforms the state-of-the-art methods.

Our future work will explore more effective approaches to learn edge costs for the multi-commodity network since it is the most critical issue for good performance. Online similarity learning is just one example of using the appearance cue to compute edge costs, and we believe that our hybrid data association framework can be further improved in terms of multi-object tracking by introducing more useful cues such as motion and shape.

\bibliographystyle{IEEEtran}
\bibliography{IEEEabrv}

\begin{thebibliography}{10}
\providecommand{\url}[1]{#1}
\csname url@samestyle\endcsname
\providecommand{\newblock}{\relax}
\providecommand{\bibinfo}[2]{#2}
\providecommand{\BIBentrySTDinterwordspacing}{\spaceskip=0pt\relax}
\providecommand{\BIBentryALTinterwordstretchfactor}{4}
\providecommand{\BIBentryALTinterwordspacing}{\spaceskip=\fontdimen2\font plus
\BIBentryALTinterwordstretchfactor\fontdimen3\font minus
  \fontdimen4\font\relax}
\providecommand{\BIBforeignlanguage}[2]{{%
\expandafter\ifx\csname l@#1\endcsname\relax
\typeout{** WARNING: IEEEtran.bst: No hyphenation pattern has been}%
\typeout{** loaded for the language `#1'. Using the pattern for}%
\typeout{** the default language instead.}%
\else
\language=\csname l@#1\endcsname
\fi
#2}}
\providecommand{\BIBdecl}{\relax}
\BIBdecl

\bibitem{breitenstein2011online}
M.~D. Breitenstein, F.~Reichlin, B.~Leibe, E.~Koller-Meier, and L.~Van~Gool,
  ``Online multiperson tracking-by-detection from a single, uncalibrated
  camera,'' \emph{Pattern Analysis and Machine Intelligence, IEEE Transactions
  on}, vol.~33, no.~9, pp. 1820--1833, 2011.

\bibitem{shu2012part}
G.~Shu, A.~Dehghan, O.~Oreifej, E.~Hand, and M.~Shah, ``Part-based
  multiple-person tracking with partial occlusion handling,'' in
  \emph{Proceedings of IEEE Conference on Computer Vision and Pattern
  Recognition (CVPR)}, 2012, pp. 1815--1821.

\bibitem{possegger2014Occlusion}
H.~Possegger, T.~Mauthner, P.~M. Roth, and H.~Bischof, ``Occlusion geodesics
  for online multi-object tracking,'' in \emph{Proceedings of IEEE Conference
  on Computer Vision and Pattern Recognition (CVPR)}, 2014, pp. 1306--1313.

\bibitem{xiang2015learning}
Y.~Xiang, A.~Alahi, and S.~Savarese, ``Learning to track: Online multi-object
  tracking by decision making,'' in \emph{Proceedings of IEEE International
  Conference on Computer Vision (ICCV)}, 2015, pp. 4705--4713.

\bibitem{solera2015learning}
F.~Solera, S.~Calderara, and R.~Cucchiara, ``Learning to divide and conquer for
  online multi-target tracking,'' in \emph{Proceedings of IEEE International
  Conference on Computer Vision (ICCV)}, 2015, pp. 4373--4381.

\bibitem{yang2016temporal}
M.~Yang and Y.~Jia, ``Temporal dynamic appearance modeling for online
  multi-person tracking,'' \emph{Computer Vision and Image Understanding
  (CVIU)}, 2016.

\bibitem{hong2016online}
J.~Hong~Yoon, C.-R. Lee, M.-H. Yang, and K.-J. Yoon, ``Online multi-object
  tracking via structural constraint event aggregation,'' in \emph{Proceedings
  of IEEE Conference on Computer Vision and Pattern Recognition (CVPR)}, 2016,
  pp. 1392--1400.

\bibitem{Milan2016arxiv}
\BIBentryALTinterwordspacing
A.~Milan, S.~H. Rezatofighi, A.~Dick, I.~Reid, and K.~Schindler, ``Online
  multi-target tracking using recurrent neural networks,''
  \emph{arXiv:1604.03635}, Apr. 2016. [Online]. Available:
  \url{http://arxiv.org/abs/1604.03635}
\BIBentrySTDinterwordspacing

\bibitem{zhang2008global}
L.~Zhang, Y.~Li, and R.~Nevatia, ``Global data association for multi-object
  tracking using network flows,'' in \emph{Proceedings of IEEE Conference on
  Computer Vision and Pattern Recognition (CVPR)}, 2008, pp. 1--8.

\bibitem{berclaz2011multiple}
J.~Berclaz, F.~Fleuret, E.~T{\"u}retken, and P.~Fua, ``Multiple object tracking
  using k-shortest paths optimization,'' \emph{Pattern Analysis and Machine
  Intelligence, IEEE Transactions on}, vol.~33, no.~9, pp. 1806--1819, 2011.

\bibitem{luo2015automatic}
W.~Luo, B.~Stenger, X.~Zhao, and T.-K. Kim, ``Automatic topic discovery for
  multi-object tracking.'' in \emph{AAAI Conference on Artifical Intelligence},
  2015, pp. 3820--3826.

\bibitem{chari2015pairwise}
V.~Chari, S.~Lacoste-Julien, I.~Laptev, and J.~Sivic, ``On pairwise costs for
  network flow multi-object tracking,'' in \emph{Proceedings of IEEE Conference
  on Computer Vision and Pattern Recognition (CVPR)}, 2015, pp. 5537--5545.

\bibitem{dehghan2015gmmcp}
A.~Dehghan, S.~M. Assari, and M.~Shah, ``{GMMCP} {T}racker: Globally optimal
  generalized maximum multi clique problem for multiple object tracking,'' in
  \emph{Proceedings of IEEE Conference on Computer Vision and Pattern
  Recognition (CVPR)}, 2015, pp. 4091--4099.

\bibitem{tang2015subgraph}
S.~Tang, B.~Andres, M.~Andriluka, and B.~Schiele, ``Subgraph decomposition for
  multi-target tracking,'' in \emph{Proceedings of IEEE Conference on Computer
  Vision and Pattern Recognition (CVPR)}, 2015, pp. 5033--5041.

\bibitem{wang2016tracklet}
B.~Wang, G.~Wang, K.~L. Chan, and L.~Wang, ``Tracklet association by online
  target-specific metric learning and coherent dynamics estimation,''
  \emph{Pattern Analysis and Machine Intelligence, IEEE Transactions on}, 2016.

\bibitem{tang2016multi}
S.~Tang, B.~Andres, M.~Andriluka, and B.~Schiele, ``Multi-person tracking by
  multicut and deep matching,'' in \emph{Proceedings of European Conference on
  Computer Vision (ECCV)}.\hskip 1em plus 0.5em minus 0.4em\relax Springer,
  2016, pp. 100--111.

\bibitem{maksai2016globally}
A.~Maksai, X.~Wang, F.~Fleuret, and P.~Fua, ``Globally consistent multi-people
  tracking using motion patterns,'' \emph{arXiv preprint arXiv:1612.00604},
  2016.

\bibitem{luo2014multiple}
W.~Luo, X.~Zhao, and T.-K. Kim, ``Multiple object tracking: A review,''
  \emph{arXiv preprint arXiv:1409.7618}, 2014.

\bibitem{karpathy2015visualizing}
A.~Karpathy, J.~Johnson, and L.~Fei-Fei, ``Visualizing and understanding
  recurrent networks,'' \emph{arXiv preprint arXiv:1506.02078}, 2015.

\bibitem{ondruska2016deep}
P.~Ondruska and I.~Posner, ``Deep tracking: Seeing beyond seeing using
  recurrent neural networks,'' in \emph{Proceedings of AAAI Conference on
  Artificial Intelligence (AAAI)}, February 2016.

\bibitem{MilanAAAIRNNTracking}
A.~Milan, S.~H. Rezatofighi, A.~Dick, I.~Reid, and K.~Schindler, ``Online
  multi-target tracking using recurrent neural networks,'' in \emph{Proceedings
  of AAAI Conference on Artificial Intelligence (AAAI)}, February 2017.

\bibitem{alahi2016social}
A.~Alahi, K.~Goel, V.~Ramanathan, A.~Robicquet, L.~Fei-Fei, and S.~Savarese,
  ``Social lstm: Human trajectory prediction in crowded spaces,'' in
  \emph{Proceedings of IEEE Conference on Computer Vision and Pattern
  Recognition (CVPR)}, 2016, pp. 961--971.

\bibitem{sadeghian2017tracking}
A.~Sadeghian, A.~Alahi, and S.~Savarese, ``Tracking the untrackable: Learning
  to track multiple cues with long-term dependencies,'' \emph{arXiv preprint
  arXiv:1701.01909}, 2017.

\bibitem{huang2008robust}
C.~Huang, B.~Wu, and R.~Nevatia, ``Robust object tracking by hierarchical
  association of detection responses,'' in \emph{Proceedings of European
  Conference on Computer Vision (ECCV)}, 2008, pp. 788--801.

\bibitem{yang2012multi}
B.~Yang and R.~Nevatia, ``Multi-target tracking by online learning of
  non-linear motion patterns and robust appearance models,'' in
  \emph{Proceedings of IEEE Conference on Computer Vision and Pattern
  Recognition (CVPR)}, 2012, pp. 1918--1925.

\bibitem{Milan:2014:CEM}
A.~Milan, S.~Roth, and K.~Schindler, ``Continuous energy minimization for
  multitarget tracking,'' \emph{Pattern Analysis and Machine Intelligence, IEEE
  Transactions on}, vol.~36, no.~1, pp. 58--72, 2014.

\bibitem{Milan2016PAMI}
A.~Milan, K.~Schindler, and S.~Roth, ``Multi-target tracking by
  discrete-continuous energy minimization,'' \emph{Pattern Analysis and Machine
  Intelligence, IEEE Transactions on}, 2016.

\bibitem{yang2014multi}
B.~Yang and R.~Nevatia, ``Multi-target tracking by online learning a {CRF}
  model of appearance and motion patterns,'' \emph{International Journal of
  Computer Vision (IJCV)}, vol. 107, no.~2, pp. 203--217, 2014.

\bibitem{lenz2015followme}
P.~Lenz, A.~Geiger, and R.~Urtasun, ``{F}ollow{M}e: Efficient online min-cost
  flow tracking with bounded memory and computation,'' in \emph{Proceedings of
  IEEE International Conference on Computer Vision (ICCV)}, 2015, pp.
  4364--4372.

\bibitem{choi2015NOMT}
W.~Choi, ``Near-online multi-target tracking with aggregated local flow
  descriptor,'' in \emph{Proceedings of IEEE International Conference on
  Computer Vision (ICCV)}, 2015, pp. 3029--3037.

\bibitem{ben2014multi}
H.~Ben~Shitrit, J.~Berclaz, F.~Fleuret, and P.~Fua, ``Multi-commodity network
  flow for tracking multiple people,'' \emph{Pattern Analysis and Machine
  Intelligence, IEEE Transactions on}, vol.~36, no.~8, pp. 1614--1627, 2014.

\bibitem{dehghan2015target}
A.~Dehghan, Y.~Tian, P.~H. Torr, and M.~Shah, ``Target identity-aware network
  flow for online multiple target tracking,'' in \emph{Proceedings of IEEE
  Conference on Computer Vision and Pattern Recognition (CVPR)}, 2015, pp.
  1146--1154.

\bibitem{ahuja1993network}
R.~K. Ahuja, T.~L. Magnanti, and J.~B. Orlin, ``Network flows: theory,
  algorithms, and applications,'' 1993.

\bibitem{chechik2010large}
G.~Chechik, V.~Sharma, U.~Shalit, and S.~Bengio, ``Large scale online learning
  of image similarity through ranking,'' \emph{Journal of Machine Learning
  Research}, vol.~11, no. Mar, pp. 1109--1135, 2010.

\bibitem{crammer2006online}
K.~Crammer, O.~Dekel, J.~Keshet, S.~Shalev-Shwartz, and Y.~Singer, ``Online
  passive-aggressive algorithms,'' \emph{Journal of Machine Learning Research},
  vol.~7, no. Mar, pp. 551--585, 2006.

\bibitem{dantzig1960decomposition}
G.~B. Dantzig and P.~Wolfe, ``Decomposition principle for linear programs,''
  \emph{Operations research}, vol.~8, no.~1, pp. 101--111, 1960.

\bibitem{schrijver1998theory}
A.~Schrijver, \emph{Theory of linear and integer programming}.\hskip 1em plus
  0.5em minus 0.4em\relax John Wiley \& Sons, 1998.

\bibitem{ford1958suggested}
L.~R. Ford~Jr and D.~R. Fulkerson, ``A suggested computation for maximal
  multi-commodity network flows,'' \emph{Management Science}, vol.~5, no.~1,
  pp. 97--101, 1958.

\bibitem{ellis2009pets}
A.~Ellis, A.~Shahrokni, and J.~M. Ferryman, ``{PETS} 2009 and {W}inter-{PETS}
  2009 results: A combined evaluation,'' in \emph{IEEE International Workshop
  on Performance Evaluation of Tracking and Surveillance (PETS-Winter)}, 2009,
  pp. 1--8.

\bibitem{keni2008evaluating}
B.~Keni and S.~Rainer, ``Evaluating multiple object tracking performance: the
  {CLEAR MOT} metrics,'' \emph{EURASIP Journal on Image and Video Processing},
  vol. 2008, pp. 1--10, 2008.

\bibitem{li2009learning}
Y.~Li, C.~Huang, and R.~Nevatia, ``Learning to associate: Hybridboosted
  multi-target tracker for crowded scene,'' in \emph{Proceedings of IEEE
  Conference on Computer Vision and Pattern Recognition (CVPR)}, 2009, pp.
  2953--2960.

\bibitem{girshick2016region}
R.~Girshick, J.~Donahue, T.~Darrell, and J.~Malik, ``Region-based convolutional
  networks for accurate object detection and segmentation,'' \emph{Pattern
  Analysis and Machine Intelligence, IEEE Transactions on}, vol.~38, no.~1, pp.
  142--158, 2016.

\bibitem{babenko2015aggregating}
A.~Babenko and V.~Lempitsky, ``Aggregating local deep features for image
  retrieval,'' in \emph{Proceedings of IEEE International Conference on
  Computer Vision (ICCV)}, 2015, pp. 1269--1277.

\bibitem{BaeY2014robust}
S.~Bae and K.~Yoon, ``Robust online multi-object tracking based on tracklet
  confidence and online discriminative appearance learning,'' in
  \emph{Proceedings of IEEE Conference on Computer Vision and Pattern
  Recognition (CVPR)}, 2014, pp. 1218--1225.

\bibitem{yoon2015bayesian}
J.~H. Yoon, M.-H. Yang, J.~Lim, and K.-J. Yoon, ``Bayesian multi-object
  tracking using motion context from multiple objects,'' in \emph{IEEE Winter
  Conference on Applications of Computer Vision (WACV)}, 2015, pp. 33--40.

\bibitem{pirsiavash2011globally}
H.~Pirsiavash, D.~Ramanan, and C.~C. Fowlkes, ``Globally-optimal greedy
  algorithms for tracking a variable number of objects,'' in \emph{Proceedings
  of IEEE Conference on Computer Vision and Pattern Recognition (CVPR)}, 2011,
  pp. 1201--1208.

\bibitem{dicle2013way}
C.~Dicle, O.~I. Camps, and M.~Sznaier, ``The way they move: Tracking multiple
  targets with similar appearance,'' in \emph{Proceedings of IEEE International
  Conference on Computer Vision (ICCV)}, 2013, pp. 2304--2311.

\bibitem{geiger20143d}
A.~Geiger, M.~Lauer, C.~Wojek, C.~Stiller, and R.~Urtasun, ``3{D} traffic scene
  understanding from movable platforms,'' \emph{Pattern Analysis and Machine
  Intelligence, IEEE Transactions on}, vol.~36, no.~5, pp. 1012--1025, 2014.

\bibitem{leal2014learning}
L.~Leal-Taix{\'e}, M.~Fenzi, A.~Kuznetsova, B.~Rosenhahn, and S.~Savarese,
  ``Learning an image-based motion context for multiple people tracking,'' in
  \emph{Proceedings of IEEE Conference on Computer Vision and Pattern
  Recognition (CVPR)}, 2014, pp. 3542--3549.

\bibitem{milan2015joint}
A.~Milan, L.~Leal-Taix{\'e}, K.~Schindler, and I.~Reid, ``Joint tracking and
  segmentation of multiple targets,'' in \emph{Proceedings of IEEE Conference
  on Computer Vision and Pattern Recognition (CVPR)}, 2015.

\bibitem{kim2015multiple}
C.~Kim, F.~Li, A.~Ciptadi, and J.~M. Rehg, ``Multiple hypothesis tracking
  revisited,'' in \emph{Proceedings of IEEE International Conference on
  Computer Vision (ICCV)}, 2015, pp. 4696--4704.

\bibitem{hamid2015joint}
S.~Hamid~Rezatofighi, A.~Milan, Z.~Zhang, Q.~Shi, A.~Dick, and I.~Reid, ``Joint
  probabilistic data association revisited,'' in \emph{Proceedings of IEEE
  International Conference on Computer Vision (ICCV)}, 2015, pp. 3047--3055.

\end{thebibliography}

\begin{IEEEbiography}[{\includegraphics[width=1in,height=1.25in,clip,keepaspectratio]{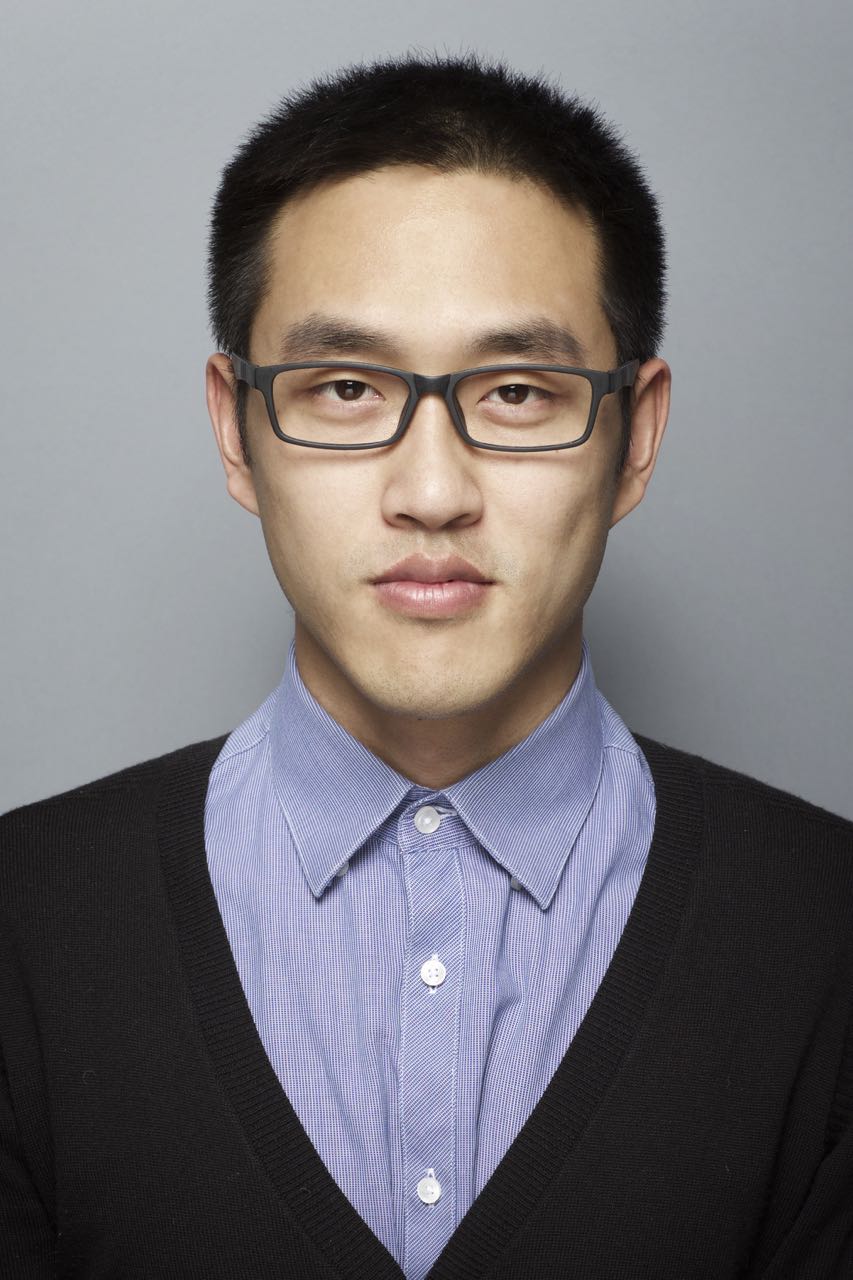}}]{Min Yang}
 received the B.S. degree and Ph.D. degree from Beijing Institute of Technology in 2010 and 2016, respectively.
 His research interests include Computer Vision, Pattern Recognition and Machine Learning.
\end{IEEEbiography}

\begin{IEEEbiography}[{\includegraphics[width=1in,height=1.25in,clip,keepaspectratio]{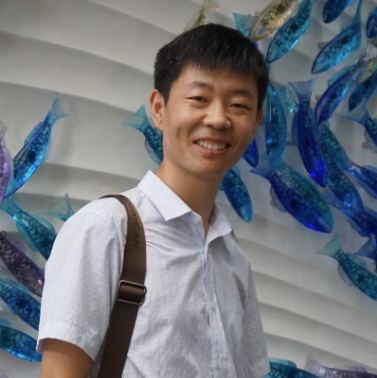}}]{Yuwei Wu} received the Ph.D. degree in computer science from Beijing Institute of Technology (BIT), Beijing, China, in 2014. He is now a research fellow at School of Electrical \& Electronic Engineering, Nanyang Technological University, Singapore. He has strong research interests in computer vision and information retrieval. He received outstanding Ph.D. Thesis award from BIT, and Distinguished Dissertation Award Nominee from China Association for Artificial Intelligence (CAAI). \end{IEEEbiography}

\begin{IEEEbiography}[{\includegraphics[width=1in,height=1.25in,clip,keepaspectratio]{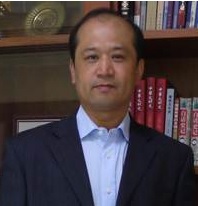}}]{Yunde Jia} (M'11) received the M.S. and Ph.D. degrees in mechatronics from the Beijing Institute of Technology (BIT), Beijing, China, in 1986 and 2000, respectively. He is currently a Professor of computer science with BIT, and serves as the Director of the Beijing Laboratory of Intelligent Information Technology, School of Computer Science. He has previously served as the Executive Dean of the School of Computer Science, BIT, from 2005 to 2008. He was a Visiting Scientist at Carnegie Mellon University, Pittsburgh, PA, USA, from 1995 to 1997, and a Visiting Fellow at the Australian National University, Acton, Australia, in 2011. His current research interests include computer vision, media computing, and intelligent systems. \end{IEEEbiography}

\end{document}